%% file: iclr2026_conference.tex
\documentclass{article} 
\usepackage{iclr2026_conference,times}

\input{math_commands.tex}

\usepackage{hyperref}
\usepackage{url}

\usepackage{algorithm}
\usepackage{algorithmic}
\usepackage{amsmath,amssymb, bm}
\usepackage{amsfonts}
\usepackage{amsthm}
\usepackage{bbm}
\usepackage[stable]{footmisc}
\usepackage{booktabs}       
\usepackage{nicefrac}       
\usepackage{microtype}      
\usepackage{xcolor}         
\usepackage{graphicx}
\usepackage{wrapfig}
\usepackage[noabbrev,capitalise,nameinlink]{cleveref}
\usepackage{multirow}
\usepackage{adjustbox} 
\usepackage{subcaption}
\usepackage{tabularx}
\usepackage{caption}
\usepackage{threeparttable}

\title{Greedy Information Projection for LLM Data Selection}

\author{Victor Ye Dong\thanks{Equal contribution},  \ \ Kuan-Yun Lee\footnotemark[1], \ \ \ Jiamei Shuai, \ \ Shengfei Liu, \ \ Yi Liu, \ \ Jian Jiao \\
  Microsoft \\
  \texttt{victordong@microsoft.com}
}

\iclrfinalcopy 

\begin{document}

\crefname{theorem}{\textbf{Theorem}}{\textbf{Theorems}}
\crefname{lemma}{\textbf{Lemma}}{\textbf{Lemmas}}
\crefname{observation}{\textbf{Observation}}{\textbf{Observations}}
\crefname{conjecture}{\textbf{Conjecture}}{\textbf{Conjectures}}
\crefname{defn}{\textbf{Definition}}{\textbf{Definitions}}
\crefname{assumption}{\textbf{Assumption}}{\textbf{Assumptions}}

\newtheorem{theorem}{Theorem}
\newtheorem{lemma}[theorem]{Lemma}
\newtheorem{proposition}{Proposition}
\theoremstyle{definition}
\newtheorem{defn}[theorem]{Definition}
\newtheorem{fact}[theorem]{Fact}
\newtheorem{conjecture}[theorem]{Conjecture}
\newtheorem{example}[theorem]{Example}
\numberwithin{equation}{section}
\newtheorem{observation}[theorem]{Observation}
\newtheorem*{remark}{Remark}

\newcommand{\best}[1]{\textbf{#1}}
\newcommand{\secondbest}[1]{\underline{#1}}
\newcommand{\gain}[1]{\textsuperscript{(#1)}} 
\newcommand{\wsavg}{WS$_\text{avg}$}

\maketitle

\input{Abstract}

\input{Introduction}

\input{Formulation.tex}

\input{Algorithms}

\input{Experiments}

\input{Conclusion.tex}

\newpage

\bibliographystyle{iclr2026_conference}

\bibliography{iclr2026_conference}

\input{supplementary.tex}

\newpage
\input{llm_usage.tex}
\end{document}

%% file: math_commands.tex
\usepackage{amsmath,amsfonts,bm}

\def\eqref#1{equation~\ref{#1}}

\def\1{\bm{1}}

\DeclareMathAlphabet{\mathsfit}{\encodingdefault}{\sfdefault}{m}{sl}
\SetMathAlphabet{\mathsfit}{bold}{\encodingdefault}{\sfdefault}{bx}{n}

\DeclareMathOperator{\Tr}{Tr}

%% file: Abstract.tex
\begin{abstract}
  We present \emph{Greedy Information Projection} (\textsc{GIP}), a principled framework for choosing training examples for large language model fine-tuning. \textsc{GIP} casts selection as maximizing mutual information between a subset of examples and task-specific query signals, which may originate from LLM quality judgments, metadata, or other sources. The framework involves optimizing a closed-form mutual information objective defined using both data and query embeddings, naturally balancing {\it quality} and {\it diversity}. Optimizing this score is equivalent to maximizing the projection of the query embedding matrix onto the span of the selected data, which provides a geometric explanation for the co-emergence of quality and diversity. Building on this view, we employ a fast greedy matching-pursuit procedure with efficient projection-based updates. On instruction-following and mathematical reasoning datasets, \textsc{GIP} selects small subsets that match full-data fine-tuning while using only a fraction of examples and compute, unifying quality-aware and diversity-aware selection for efficient fine-tuning.
\end{abstract}

%% file: Introduction.tex
\section{Introduction}

A central question in the discussion of large language model (LLM) training is how one should select data. With recent developments of large-scale pretrained language models such as \textsc{GPT-4}~\citep{achiam2023gpt}, Qwen~\citep{yang2025qwen3} and Llama~\citep{grattafiori2024llama3herdmodels}, the community has witnessed models achieving remarkable performance across a broad spectrum of natural-language tasks. Typically, these models train on large amounts of data that scale with model size, commonly known as the scaling law. Empirical scaling laws of LLMs reveal that model capability grows predictably with (i) parameter count, (ii) compute, and (iii) training data volume~\citep{hoffmann2022trainingcomputeoptimallargelanguage,kaplan2020scalinglawsneurallanguage}.  Yet practitioners increasingly observe a subtler regime: once data quantity is sufficient, \emph{data quality} becomes the primary factor limiting further gains.  Consequently, a fundamental challenge in both pre-training and instruction tuning is to select a subset of samples that maximizes downstream performance while respecting resource constraints. 

Earlier work addressed the problem through diverse dataset construction
\citep{wangSelfInstructAligningLanguage2022,taori2023alpaca} and ad-hoc
filtering heuristics such as perplexity thresholds, deduplication, or clustering
\citep{bukharinDataDiversityMatters2024,zhao2024long,chen2023alpagasus,ge2024clustering}.
While effective, these heuristics offer little theoretical guidance on
\emph{why} a particular example is valuable, and they do not unify quality and
diversity under a single objective.

\paragraph{This work.}
We present a principled \emph{information-theoretic} framework for data selection built on a mutual information (MI) metric between Gaussians scaled by data and query embeddings. As we will demonstrate later, this metric promotes both diversity and quality in a single objective, and also gives rise to efficient algorithm design. Our main contributions are:

\begin{enumerate}
\item \textbf{Principled theoretical formulation.} We propose a principled theoretical framework that casts the data selection problem as maximization of mutual information. Optimizing this objective naturally promotes diversity and quality of selected data.
This framework is also flexible, enabling balanced selection with respect to disparate information sources, such as factuality and style scores from LLM evaluations.
\item \textbf{Efficient approximation algorithms.} We develop a greedy matching pursuit (MP) approximation algorithm that solves an approximate dual problem. The Greedy MP approach scales linearly with the total size of available data in practice, enabling data selection under realistic budget constraints.
\item \textbf{Strong empirical results.} On instruction-tuning benchmarks, our \emph{Greedy Information Projection} (\textsc{GIP}) achieves substantial data efficiency gains over state-of-the-art baselines. Across MT-Bench, BBH, and GSM8K, \textsc{GIP} often approaches full-dataset performance using only 1--20\% of training data, and in multiple settings matches or exceeds full-data results---corresponding to a substantial data reduction (see Figure \ref{fig:ws_score}).
\end{enumerate} 
\begin{figure}[t]
\begin{subfigure}{.5\textwidth}
    \centering
    \includegraphics[width=1.0\linewidth]{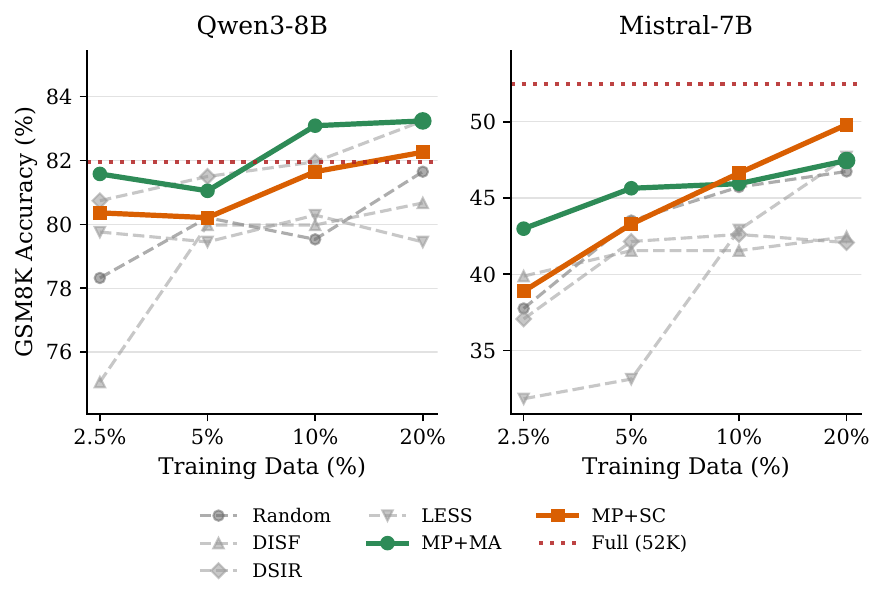}
    \label{fig:efficiency_analysis}
\end{subfigure}%
\begin{subfigure}{.5\textwidth}
    \centering
    \includegraphics[width=1.0\linewidth]{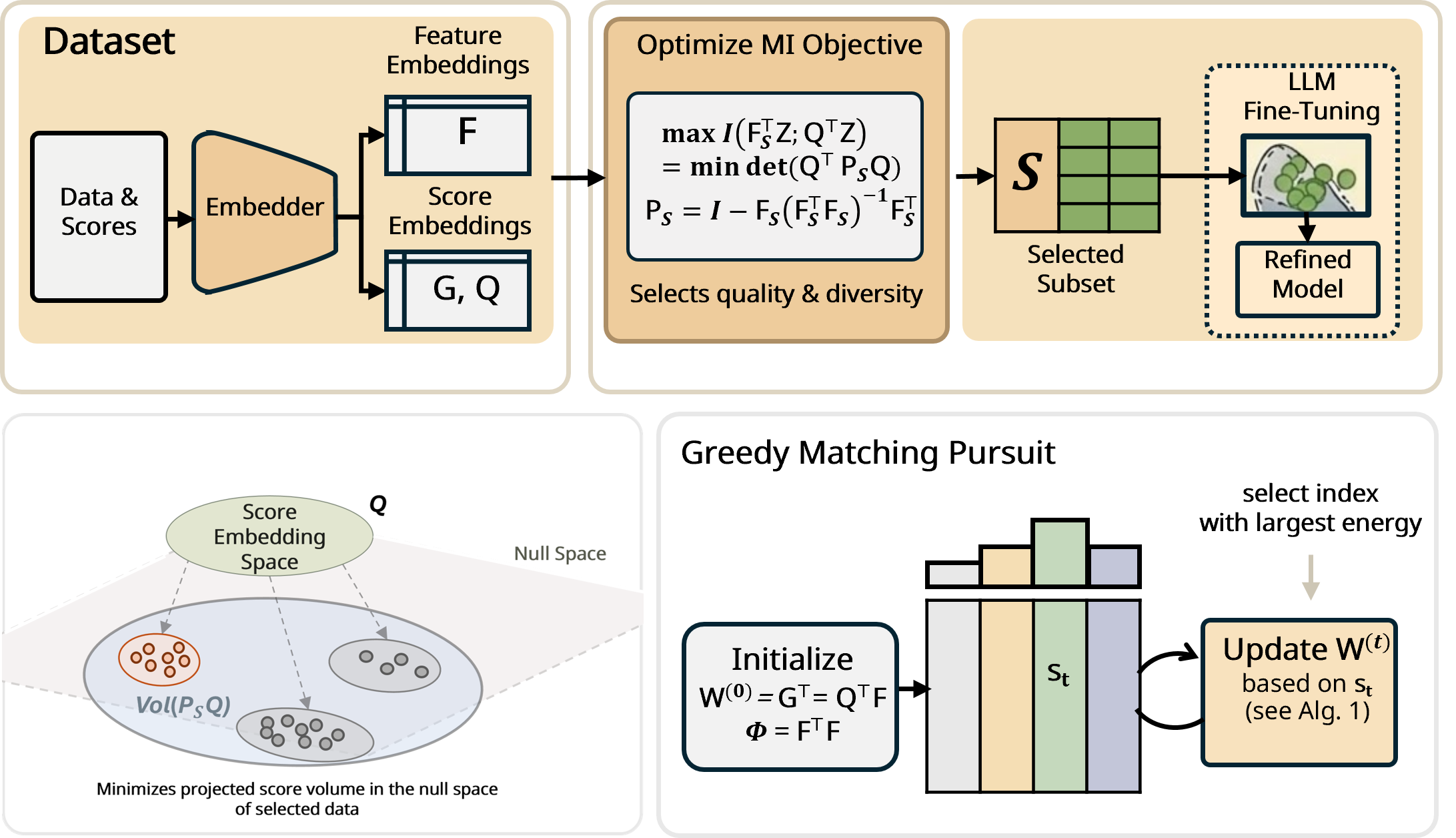}
    \label{fig:geometric_interpretation}
\end{subfigure}
\caption{\textbf{Left:} GSM8K performance comparison on Qwen3-8B (left panel) and Mistral-7B (right panel) across different training data percentages (2.5\%, 5\%, 10\%, 20\%). Our proposed methods MP+MA and MP+SC are competitive with strong baselines (Random, DSIR, DISF, LESS), and often improve over Random/DISF/LESS. In several settings, they approach the full-dataset (100\%, shown as a horizontal reference line) performance at 10\%--20\% of training data, demonstrating strong data efficiency. \textbf{Right:} Geometric interpretation of \textsc{GIP}. The method maximizes mutual information between Gaussian projections induced by the data embedding matrix $F$ and score embedding matrix $Q$. This is equivalent to minimizing the volume (determinant) of score embeddings projected onto the null space of selected data, naturally balancing quality (high-score items) and diversity (new directions in embedding space).}
\label{fig:ws_score}
\end{figure}

\section{Related Work}
\subsection{Data Curation for Large-Scale Language Models}

Diversity and quality are recognized as crucial factors impacting the overall quality of model training \citep{senerActiveLearningConvolutional2018, chenDiversitySyntheticData2024, chenDataJuicerOneStopData2023}. With the recent development of large language models, there is a growing trend in studies focusing on data composition quality and diversity, and how these factors influence language model pretraining and instruction tuning.

For example, \citet{leeDeduplicatingTrainingData2022} highlight the importance of diversity in improving training efficiency and propose a method to remove near-duplicate records from training data. Similarly, \citet{bukharinDataDiversityMatters2024} emphasize the significance of both quality and diversity in instruction tuning data, proposing a general method to balance these two aspects in training data selection. Furthermore, \citet{duMoDSModelorientedData2023} introduce a systematic data selection approach that combines data quality, diversity, and augmentation for instruction tuning. However, neither \citep{bukharinDataDiversityMatters2024} nor \citep{duMoDSModelorientedData2023} have a unified framework to unify {\it both} quality and diversity; diversity was either applied as a penalty or considered in a different stage of data selection.

Prior work has explored various data selection strategies. \citet{chen2023alpagasus} propose instruction-following difficulty (IFD) scoring to identify high-quality instruction data. \citet{ge2024clustering} introduce clustering-based active retrieval (CaR) that uses representativeness and uncertainty for data selection. \citet{Xia2024LESS} develop LESS, a gradient-based method for selecting informative training data. \citet{Xie2023DSIR} propose data selection using importance resampling (DSIR) for domain adaptation. Our work differs by providing a unified information-theoretic framework that naturally balances quality and diversity in a single objective, while being computationally efficient and theoretically grounded. Optimal experimental designs for selecting diverse training examples are another strategy; see, for example \citep{deb2025fishersft}, for a discussion of how selecting diverse training examples relates to log-likelihood maximization.

\subsection{Information-Theoretic Objectives in Selection and Clustering}

There is a long history of applications of mutual information to data selection and clustering. Mutual information can capture complex relationships between features that might not be apparent \citep{knops2006normalized}. Mutual information is also flexible, working on various types of data such as categorical data \citep{he2008k} and numerical data \citep{kraskov2005hierarchical}. Approximations of mutual information have also been applied to k-means \citep{sugiyama2014information, calandriello2014semi}. More recently, mutual information based algorithms have been developed for clustering \citep{do2021clustering}, community detection \citep{newman2020improved}, 3D object representation learning \citep{sanghi2020info3d}, and unsupervised sentence embeddings \citep{zhang2020unsupervised}. Theoretical developments of mutual information inequalities have also been applied in pairwise comparisons \citep{lee2021minimax} and generalized linear models \citep{lee2020linear}, where optimal selections and minimax algorithms are understood to be closely related to singular values of query matrices; see, e.g., \citep{lee2022new} for a broad discussion.

It is important to note that mutual information is often difficult to compute and various forms of estimators and approximations are used to reduce the computation cost; see \citep{kraskov2004estimating} for a classic mutual information estimator. In this paper, we consider the mutual information of joint-Gaussian variables --- Gaussianity provides us many favorable elementary expressions which allow us to bypass much of the complexity of estimating mutual information.

\subsection{Active Learning and Coreset Selection}

Our work is related to active learning \citep{Settles2009ActiveLearningSurvey} and coreset construction \citep{bachem2017practical}, which aim to select informative examples and compact subsets for downstream tasks, respectively (e.g., DSIR~\citep{Xie2023DSIR} and DISF~\citep{fan2025combattingdimensionalcollapsellm}). We build on this broader theme of data efficiency, introducing an information-theoretic selection criterion that jointly considers quality and diversity for instruction-tuning.




%% file: Formulation.tex
\section{Problem formulation} \label{section:formulation} 
We are given a set of $m$ data points $\mathcal{F}=\{f_1,\ldots, f_m\}$ with $f_i \in \mathbb{R}^d$ representing features of selectable data, and $n$ score vectors $\mathcal{G} = \{ g_i \in \mathbb{R}^m, i\in [n]\}$ where each score vector $g_i$ represents evaluations on all $m$ data points for a specific quality dimension. These scores can come from LLM evaluators assessing quality dimensions (helpfulness, accuracy, reasoning) or internal measures like self-consistency estimates.

Our goal is to select, given data features $\mathcal{F}$ and quality scores $\mathcal{G}$, at most $k$ data features that are both {\it diverse} (spanning different regions of the feature space) and {\it high-quality} (aligned with scores).

\subsection{Mutual Information Formulation}
Let $F \in \mathbb{R}^{d\times m}$ be the data embedding matrix with $F = [f_1, f_2, \ldots, f_m]$, and let $G \in \mathbb{R}^{m\times n}$ be the score matrix with $G = [g_1, g_2, \ldots, g_n]$. For selection $S \subseteq [m]$, define $F_S \in \mathbb{R}^{d \times |S|}$ as the matrix of selected data columns. We assume data embeddings are normalized: $\|f_i\|_2 = 1$ for all $i \in [m]$.

Given scores $G$, we construct a query embedding $Q \in \mathbb{R}^{d\times n}$ so that $F^\top Q \approx G$. This establishes a connection between the feature space (where diversity is measured) and the score space (where quality is defined). Since $m \gg d$ in practice, $F$ typically has rank at most $d$ and is not full column rank. To ensure existence and numerical stability of $Q$, one can use Tikhonov regularization and solve
\begin{equation}
Q_\epsilon = \arg\min_Q \|F^\top Q - G\|_F^2 + \epsilon \|Q\|_F^2
\end{equation}
with solution $Q_\epsilon = F(F^\top F + \epsilon I_m)^{-1} G$ for small $\epsilon > 0$. This ensures $\|F^\top Q_\epsilon - G\|_F \leq C\epsilon$ for some constant $C$. Details are provided in Appendix~\ref{app:regularization}.

\begin{remark}[Necessity of $Q$]
    We will later show that our approximation algorithms do not rely on the exact choice of $Q$ but rather work directly from $F$ and $G$. We introduce $Q$ primarily for theoretical and geometric interpretation.
\end{remark}

We are now ready to formulate the mutual information objective. In particular, we consider the standard Gaussian $Z \in \mathcal{N}(0,I_d)$ and transformations $Z_Q := Q^\top Z$, $Z_{F_S} = F_S^\top Z$:
\begin{align} \label{eq:joint-gaussians}
    \begin{bmatrix} Z_Q \\ Z_{F_S} \end{bmatrix}\sim \mathcal{N}\left(0, \begin{bmatrix} Q^\top Q & Q^\top F_S \\ F_S^\top Q & F_S^\top F_S \end{bmatrix} \right)
\end{align}
The mutual information between $Z_Q$ and $Z_{F_S}$ is:
\begin{align} \label{eq:mutual-inf-def}
    I(Z_Q; Z_{F_S}) = \frac{1}{2} \log \left(\frac{\det(Q^\top Q)\det(F_S^\top F_S)}{\det(\Sigma)}\right)
\end{align}
where $\Sigma$ is the joint covariance matrix in \eqref{eq:joint-gaussians}. Our goal is to select $S$ to maximize this mutual information.

\begin{remark}[Gaussianity as a modeling device]
    This construction does not assume that data or query embeddings follow a Gaussian distribution. Rather, the auxiliary variable $Z \sim \mathcal{N}(0, I_d)$ serves as a random probe that induces Gaussian random projections $Q^\top Z$ and $F_S^\top Z$ whose second-order structure
    is fully determined by the geometry of the embeddings. Different choices of functionals on these projections lead to different geometric criteria. For example, objectives based solely on variances or covariances recover quadratic (e.g., $\ell_2$-type) measures of alignment between subspaces. In our case, we adopt mutual information as the objective of interest, which admits a closed-form log-determinant expression, is invariant under orthogonal transformations, and captures the total linear dependence
    between the subspaces spanned by $Q$ and $F_S$.
\end{remark}

Elementary linear algebraic operations give the following interpretation.

\begin{theorem}\label{thm:thm-MI}
    Maximizing mutual information defined in \eqref{eq:mutual-inf-def} is equivalent to optimizing
    \begin{align} \label{MaxMIObj}
        \arg \max_{S} I(Z_Q; Z_{F_S}) = \arg \min_{S} \det\left(Q^\top \left(I - F_S (F_S^\top F_S)^{-1} F_S^\top \right) Q \right).
    \end{align}
\end{theorem}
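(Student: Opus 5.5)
The plan is to expand the joint determinant $\det(\Sigma)$ with a Schur complement taken with respect to the lower-right block $F_S^\top F_S$, so that the factor $\det(F_S^\top F_S)$ cancels against the numerator of \eqref{eq:mutual-inf-def}.

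\textbf{Assumption.} Throughout I assume $F_S$ has full column rank $|S|$, so that $F_S^\top F_S$ is invertible. This is implicit in the statement, since the right-hand side of \eqref{MaxMIObj} contains $(F_S^\top F_S)^{-1}$. I also assume $\det(Q^\top Q) > 0$, so that \eqref{eq:mutual-inf-def} is finite. If $Q^\top Q$ is singular, I would read both sides through the same limiting or pseudo-determinant convention; I would flag this case but not dwell on it.

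\textbf{Step 1: Schur complement.} For the block matrix $\Sigma$ in \eqref{eq:joint-gaussians}, the block-determinant identity gives
\begin{align*}
\det(\Sigma) = \det(F_S^\top F_S)\,\det\!\left(Q^\top Q - Q^\top F_S (F_S^\top F_S)^{-1} F_S^\top Q\right).
\end{align*}
Factoring $Q^\top$ on the left and $Q$ on the right of the second factor rewrites it as
\begin{align*}
\det\!\left(Q^\top (I - P_S) Q\right), \qquad P_S := F_S (F_S^\top F_S)^{-1} F_S^\top .
\end{align*}
Here $P_S$ is the orthogonal projector onto $\mathrm{span}(F_S)$.

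\textbf{Step 2: substitute into the MI formula.} Plugging Step 1 into \eqref{eq:mutual-inf-def} cancels $\det(F_S^\top F_S)$ and leaves
\begin{align*}
I(Z_Q; Z_{F_S}) = \tfrac12 \log\det(Q^\top Q) - \tfrac12 \log\det\!\left(Q^\top (I-P_S) Q\right).
\end{align*}

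\textbf{Step 3: conclude.} The first term does not depend on $S$. Since $-\tfrac12\log(\cdot)$ is strictly decreasing on $(0,\infty)$, maximizing $I$ over $S$ is the same as minimizing $\det(Q^\top(I-P_S)Q)$, which gives the same argmax/argmin sets and proves \eqref{MaxMIObj}.

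\textbf{Main obstacle: the degenerate case.} The only delicate point is when $\det(Q^\top(I-P_S)Q) = 0$. This happens when the columns of $Q$, once projected onto the orthogonal complement of $\mathrm{span}(F_S)$, are linearly dependent, for instance when $\mathrm{span}(F_S)$ captures all but fewer than $n$ directions of $Q$. In that case $\det\Sigma = 0$ and the mutual information is $+\infty$.

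The monotone correspondence still holds in this case: such $S$ are simultaneously maximizers of $I$ (value $+\infty$) and minimizers of the determinant (value $0$). I would check this explicitly using the convention $\log 0 = -\infty$. I would also note that $Q^\top(I-P_S)Q$ is positive semidefinite, since $I - P_S$ is an orthogonal projector, so its determinant is always nonnegative and the ordering argument in Step 3 is valid.
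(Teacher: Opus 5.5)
Your proof is correct, and it takes a more direct route than the paper's. The paper takes the Schur complement of $\Sigma$ with respect to the query block $Q^\top Q$, so it starts from
\[
\log\det(F_S^\top F_S) - \log\det\bigl(F_S^\top F_S - F_S^\top Q (Q^\top Q)^{-1} Q^\top F_S\bigr),
\]
i.e.\ the form $h(Z_{F_S}) - h(Z_{F_S}\mid Z_Q)$. It then invokes the generalized matrix determinant lemma (with $A = Q^\top Q$) to turn this $|S|\times|S|$ determinant into the $n\times n$ quantity $\det(Q^\top Q)^{-1}\det\bigl(Q^\top(I-P_S)Q\bigr)$.

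You complement the other block, $F_S^\top F_S$. Then $Q^\top(I-P_S)Q$ appears at once and $\det(F_S^\top F_S)$ cancels without any further lemma. In information-theoretic terms you use $I = h(Z_Q) - h(Z_Q\mid Z_{F_S})$ directly. The paper's determinant-lemma step is exactly the check that the two Schur factorizations of $\det\Sigma$ agree, which is the symmetry of mutual information.

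What your version buys:
\begin{itemize}
\item It is shorter, and needs $F_S^\top F_S$ invertible only for the factorization; $\det(Q^\top Q)>0$ is required only so the MI formula is finite.
\item It makes the conditional-covariance meaning of the objective immediate.
\item It avoids the paper's intermediate bookkeeping, where some transposition and $F$-versus-$F_S$ slips appear.
\item Your handling of the degenerate case, via positive semidefiniteness of $Q^\top(I-P_S)Q$, is something the paper does not address.
\end{itemize}

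What the paper's starting form buys is an explicit display of the diversity term $\log\det(F_S^\top F_S)$, matching the entropy/DPP remark in the formulation section. That is not needed for the theorem itself.
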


An intuitive interpretation of \eqref{MaxMIObj} is that the matrix $P_S := I - F_S (F_S^\top F_S)^{-1} F_S^\top$ projects $Q$ onto the null space of $F_S$. Minimizing $\det(Q^\top P_S Q)$ selects $S$ that minimizes the volume of $Q$ after projection, naturally encouraging both diversity (expanding into new directions in embedding space) and quality (aligning with high-score directions).

We establish a connection between optimizing mutual information and the quadratic expression $\|F_{S^*}^\top Q\|_2$. 
\begin{theorem}\label{thm:quality-bounds}
    Given any selection $S^*$ that maximizes \eqref{MaxMIObj}, 
    \begin{align}
        \|F_{S^*}^\top Q\|_2 \geq \sigma_{\min}(F_S) \sqrt{1 - \left(\frac{\eta}{\det(Q^\top Q)}\right)^{1/r}}
    \end{align}
    where $\eta = \det\!\big(Q^\top (I - P_{S^{*}}) Q\big)$ and $r$ is the rank of $Q$.
\end{theorem}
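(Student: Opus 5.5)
The plan is to reduce the claim to an eigenvalue statement about the part of $Q$ lying in $\operatorname{span}(F_{S^*})$. Write $\Pi := I - P_{S^*} = F_{S^*}(F_{S^*}^\top F_{S^*})^{-1}F_{S^*}^\top$ for the orthogonal projector onto the column span of $F_{S^*}$, so that $\eta = \det(Q^\top \Pi Q)$ exactly as in the statement. I read $\sigma_{\min}(F_S)$ as $\sigma_{\min}(F_{S^*})$, the smallest nonzero singular value, and I assume $r$ equals the number of columns of $Q$, so that $Q^\top Q$ is invertible.

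\textbf{Step 1 (peel off $F_{S^*}$).} Since $\Pi F_{S^*} = F_{S^*}$, we have $F_{S^*}^\top Q = F_{S^*}^\top \Pi Q$. Every column of $\Pi Q$ lies in the range of $F_{S^*}$. On that subspace $\|F_{S^*}^\top x\|_2 \ge \sigma_{\min}(F_{S^*})\|x\|_2$. Taking $x = \Pi Q v$ with $v$ a top right singular vector of $\Pi Q$ gives
\[
\|F_{S^*}^\top Q\|_2 \;\ge\; \sigma_{\min}(F_{S^*})\,\|\Pi Q\|_2 \;=\; \sigma_{\min}(F_{S^*})\sqrt{\lambda_{\max}(Q^\top \Pi Q)} .
\]
It therefore suffices to show $\lambda_{\max}(Q^\top\Pi Q) \ge 1 - (\eta/\det(Q^\top Q))^{1/r}$.

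\textbf{Step 2 (normalize and diagonalize).} Let $A = Q^\top Q$ and let $\mu_1,\dots,\mu_r \in [0,1]$ be the eigenvalues of $A^{-1/2}Q^\top \Pi Q A^{-1/2}$. They lie in $[0,1]$ because $0 \preceq \Pi \preceq I$. Then $\prod_i \mu_i = \eta/\det A$. The complementary eigenvalues $1-\mu_i$ are those of the normalized null-space part $A^{-1/2}Q^\top P_{S^*} Q A^{-1/2}$, which is the quantity \cref{thm:thm-MI} says the optimal $S^*$ makes small.

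The statement is scale-dependent on its right-hand side, since $\eta/\det A$ is scale-free while $\lambda_{\max}$ is not. I would therefore work in the normalization $A = I$, i.e.\ $Q$ with orthonormal columns. In that normalization $\lambda_{\max}(Q^\top\Pi Q) = \max_i \mu_i$.

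\textbf{Step 3 (the key inequality, and the main obstacle).} The goal is now $\max_i \mu_i \ge 1 - (\prod_i \mu_i)^{1/r}$. By AM--GM, $\max_i \mu_i \ge (\prod_i\mu_i)^{1/r} = (\eta/\det A)^{1/r}$. This already implies the target whenever $(\eta/\det A)^{1/r} \ge 1/2$, because then $t \ge 1-t$ for $t = (\eta/\det A)^{1/r}$.

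I expect the regime $(\eta/\det A)^{1/r} < 1/2$ to be the real obstacle. There the target would require $\max_i\mu_i$ to be large while the geometric mean of the $\mu_i$ is small. In that regime I would not rely on AM--GM. I would instead try to use the optimality of $S^*$ in \eqref{MaxMIObj}, combined with Minkowski's determinant inequality
\[
\det(A)^{1/r} \ge \det(Q^\top\Pi Q)^{1/r} + \det(Q^\top P_{S^*}Q)^{1/r},
\]
to force at least one direction of $Q$ to be well captured by $\operatorname{span}(F_{S^*})$. For example, I would compare against a competitor $S$ containing a single column of $F$ best aligned with $Q$.

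If that comparison does not yield the missing margin, the bound will have to be stated under the normalization $A = I$ together with the condition $(\eta/\det A)^{1/r} \ge 1/2$. This condition is exactly where Steps 1--3 close cleanly. Indeed, when all $\mu_i$ are uniformly tiny, $\|\Pi Q\|_2$ is tiny while $1-(\eta/\det A)^{1/r}$ is near $1$, so no argument can avoid some such restriction.
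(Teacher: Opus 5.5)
There is a genuine gap, and it comes from which determinant you take $\eta$ to be. You read $\eta=\det(Q^\top\Pi Q)$ with $\Pi$ the projector onto $\operatorname{span}(F_{S^*})$.

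\textbf{Your reading fails.} Under that reading the inequality is false, as you suspect. Optimality of $S^*$ cannot rescue it, so the Minkowski/competitor plan in Step 3 has nothing to find. Take $d=3$, $F=[e_1,e_2]$, $k=1$, and a single unit query $q=(\epsilon,0,1)^\top/\sqrt{1+\epsilon^2}$. Then $S^*=\{1\}$ is optimal, since its null-space volume $1/(1+\epsilon^2)$ beats $1$ for $\{2\}$. We get $\|F_{S^*}^\top q\|_2=\epsilon/\sqrt{1+\epsilon^2}$. Your $\eta=\epsilon^2/(1+\epsilon^2)$ makes the right-hand side $1/\sqrt{1+\epsilon^2}\approx 1$. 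What your proposal actually establishes is only the restricted regime $(\eta/\det A)^{1/r}\ge 1/2$, which is not the theorem.

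\textbf{The intended reading.} The paper's proof uses the other convention. With $Q=UR_Q$ and $F_S=VR$, it sets $P_{F_S}=VV^\top$, the span projector. So its $\eta=\det\big(Q^\top(I-P_{F_{S^*}})Q\big)$ is the null-space volume. In main-text notation this is $\det(Q^\top P_{S^*}Q)$, precisely the quantity minimized in \eqref{MaxMIObj}. The statement's notation clashes with the main text's $P_S$, but this is the reading under which the bound is true and under which optimality makes it informative.

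\textbf{How your Step 2 then finishes.} The eigenvalues $1-\mu_i\in[0,1]$ of $A^{-1/2}Q^\top P_{S^*}QA^{-1/2}$ have product $\eta/\det A$. Hence $\min_i(1-\mu_i)\le(\eta/\det A)^{1/r}$, i.e.\ $\max_i\mu_i\ge 1-(\eta/\det A)^{1/r}$. You had the complementary eigenvalues in hand but applied the geometric-mean bound to the $\mu_i$ instead of to the $1-\mu_i$.

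Combine this with your Step 1; the paper does the same via $F_S^\top Q=R^\top(V^\top U)R_Q$ and $\|ABC\|_2\ge\sigma_{\min}(A)\|B\|_2\sigma_{\min}(C)$. The result is the bound for every $S$, not only the optimal one; optimality is never used. In the example above it holds with equality.

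Your normalization caveat is correct and matches the paper. In general the right-hand side carries an extra factor $\sigma_{\min}(Q)$, which disappears only when $Q^\top Q=I_r$.
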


This provides a lower bound on how well the selected data $F_{S^*}$ aligns with the query directions $Q$.

\begin{remark} When mutual information is ill-defined (for instance, when there are no scores), we maximize the entropy $h(Z_{F_S}) = \frac{1}{2} \log \det(F_S^\top F_S)$ instead. This becomes a diversity optimization, consistent with our framework. Notably, maximizing $h(Z_{F_S})$ coincides with that of a determinantal point process (DPP) \citep{kulesza2012determinantal}.
\end{remark}

%% file: Algorithms.tex
\newcommand{\tr}{\text{Tr}}
\newcommand{\norm}[1]{\left\lVert#1\right\rVert}

\section{Greedy approximation algorithm}
With small data sizes, one can permute through all possible selections to obtain the optimal selection $S$, but the exponential computation cost is impractical in real-world scenarios where data sizes are large. Instead, we employ a classic matching pursuit approach to approximate \eqref{MaxMIObj}, following a direct application of the classic AM-GM inequality.
\begin{theorem}
    The determinant stated in \eqref{MaxMIObj} satisfies
\begin{align} \label{eq:am-gm}
    \det\left(Q^\top \left(I - F_S (F_S^\top F_S)^{-1} F_S^\top \right) Q \right) \leq \left(\frac{\Tr((Q^\top \left(I - F_S (F_S^\top F_S)^{-1} F_S^\top \right) Q)}{n}\right)^n.
\end{align}
\end{theorem}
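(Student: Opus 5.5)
The plan is to apply the AM--GM inequality to the eigenvalues of a positive semidefinite matrix. Set $M := Q^\top P_S Q \in \mathbb{R}^{n\times n}$, where $P_S = I - F_S(F_S^\top F_S)^{-1}F_S^\top$ is the matrix introduced after \cref{thm:thm-MI}. The first step is to verify that $P_S$ is the orthogonal projector onto the orthogonal complement of $\mathrm{span}(F_S)$. This requires $F_S^\top F_S$ to be invertible, which is already implicit in the statement since the inverse appears there. The two properties to check are that $P_S$ is symmetric and that
\begin{align*}
P_S^2 = I - 2F_S(F_S^\top F_S)^{-1}F_S^\top + F_S(F_S^\top F_S)^{-1}F_S^\top F_S(F_S^\top F_S)^{-1}F_S^\top = P_S .
\end{align*}
It follows that $M = (P_S Q)^\top (P_S Q)$. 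Hence $M$ is symmetric positive semidefinite, and its eigenvalues satisfy $\lambda_1,\ldots,\lambda_n \ge 0$.

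Next I express both sides of \eqref{eq:am-gm} through these eigenvalues, using $\det(M) = \prod_{i=1}^n \lambda_i$ and $\Tr(M) = \sum_{i=1}^n \lambda_i$. Applying the arithmetic--geometric mean inequality to the nonnegative reals $\lambda_1,\ldots,\lambda_n$ gives
\begin{align*}
\Big(\prod_{i=1}^n \lambda_i\Big)^{1/n} \le \frac{1}{n}\sum_{i=1}^n \lambda_i .
\end{align*}
Both sides are nonnegative and $t\mapsto t^n$ is monotone on $[0,\infty)$, so raising to the $n$-th power yields $\det(M) \le (\Tr(M)/n)^n$, which is exactly \eqref{eq:am-gm}. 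If some $\lambda_i = 0$, the left side is zero and the inequality holds trivially, so degenerate cases need no separate treatment.

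The only step that needs care is the positive semidefiniteness of $M$. AM--GM fails for arbitrary real numbers, so the argument depends on the projector structure of $P_S$. Once idempotence and symmetry are checked, the rest is immediate. I would also remark that equality holds exactly when all eigenvalues of $M$ coincide. This indicates when the trace surrogate is tight, and it motivates minimizing the trace $\Tr(Q^\top P_S Q) = \|Q\|_F^2 - \|P_{F_S}^{\perp\perp}\|$-type projection energy, i.e.\ maximizing $\|(I-P_S)Q\|_F^2$, in the greedy matching-pursuit algorithm that follows.
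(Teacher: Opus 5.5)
Your proof is correct and follows the paper's approach. The paper gives no separate proof and only says the bound is a direct application of AM--GM; your check that $Q^\top P_S Q = (P_S Q)^\top (P_S Q)$ is positive semidefinite, so its eigenvalues are nonnegative, supplies the one detail that appeal leaves implicit. The expression $\|P_{F_S}^{\perp\perp}\|$ in your closing remark is garbled and should read $\Tr(Q^\top P_S Q) = \|Q\|_F^2 - \|(I-P_S)Q\|_F^2$, though this plays no role in the argument.
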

By shifting our optimization target to \eqref{eq:am-gm}, the relaxed optimization objective becomes a trace and enjoys linearity. In particular, recall that $Q$ is a $\mathbb{R}^{d \times n}$ matrix constructed by a concatenation of $q_1,\ldots, q_n \in \mathbb{R}^d$. By linearity, we have
\begin{align}
    \Tr((Q^\top \left(I - F_S (F_S^\top F_S)^{-1} F_S^\top \right) Q) &= \sum_{i=1}^n \Tr\left(q_i^\top \left(I - F_S (F_S^\top F_S)^{-1} F_S^\top \right) q_i \right), \label{eq:linearized-trace}
\end{align}
and we obtain a quadratic expression with favorable properties to work with. Note also that the minimization of \eqref{eq:linearized-trace} over $S$ depends on $Q$ only through scores $g_i:=F^\top q_i \in \mathbb{R}^m$, $i \in [n]$. Therefore, the expression in \eqref{eq:linearized-trace} permits us to work directly with scores $G$ {\it without the knowledge of $Q$}.

In this section, we detail our {\bf Greedy MP} algorithm that approximates the {\it dual problem} by a classic matching pursuit approach.

\subsection{Greedy matching pursuit (MP)}
\label{sec:omp_solver}

The intuition is to note that \eqref{eq:linearized-trace} involves computing lengths of $q_i$ projected onto the null space of $F_S$, which can be written as a dual form:
\begin{align}
    \min_{S}
    \sum_{i=1}^n \min_{\lambda\in\mathbb{R}^k} \left \| q_i - \sum_{j\in S} \lambda_{j}f_{j} \right \|_2^2 .
    \label{eq:one_atom_ls}
\end{align}
Greedy MP solves this by minimizing residual gain across all query embeddings: at step $t+1$, 
\begin{align}
    &s_{t+1}, \lambda_{t+1}^* = \arg \min_{s\in[m]/S_t}
    \sum_{i=1}^n \min_{\lambda\in\mathbb{R}^k} \left \| r_{i} - \lambda_{i} f_{s} \right \|_2^2,
    \label{eq:OMP-target}
\end{align}
where $r_{i} = q_i - \sum_{j=1}^t\lambda^*_{j}f_{s_j}$ is the residual of the $i$-th score at step $t+1$. Note that for each individual $q_i$ and candidate $f_s$, the minimizer $\lambda_{i}^* = r_i^{\top} f_{s}$ is unique, and satisfies
\begin{align}
    \min_{\lambda\in \mathbb{R}} \ \left \| r_i - \lambda f_{s}\right \|_2^2 = \left \| r_i - r_i^{\top} f_{s} f_{s}\right \|_2^2 = \|r_i\|^2 - (r_i^\top f_s)^2 . \label{eq:omp-update-step}
\end{align}
Substituting \eqref{eq:omp-update-step} into \eqref{eq:OMP-target}, the greedy selection of $s_{t+1}$ is one that solves
\begin{align*}
    s_{t+1} = \arg\max_{s\in [m] \setminus S_t} \sum_{i=1}^n (r_i^\top f_s)^2.
\end{align*}
By defining the {\it residual score matrix} $W \in \mathbb{R}^{n\times m}$ with entries $W_{i,j} = r_i^\top f_j$, we get $s_{t+1}$ by solving
\begin{align*}
    s_{t+1} = \arg\max_{s\in [m] \setminus S_t} \sum_{i=1}^n (W^{(t)}_{i,s})^2.
\end{align*}
After selection $s_{t+1}$, $W$ can be updated with
\begin{align} \label{eq:omp-update-step3}
    W^{(t+1)}_{i, j} \leftarrow  W^{(t)}_{i,j} - \Phi_{j,s_{t+1}} \cdot W^{(t)}_{i,s_{t+1}},
\end{align}
where $\Phi_{j,s} = f_j^\top f_s$ are the precomputed inner products between data vectors. This update corresponds to the new residual $r_i' = r_i -  (r_i^\top f_{s_{t+1}})f_{s_{t+1}}$, yielding $r_i'^\top f_j = r_i^\top f_j - (f_j^\top f_{s_{t+1}}) r_i^\top f_{s_{t+1}}$ as expected.

Immediately, this suggests we can efficiently solve \eqref{eq:one_atom_ls} by maintaining and updating a residual score matrix $W$ {\it in-place}, while inner products of data vectors can be efficiently looked up by precomputing $F^\top F$ in memory. 

\textbf{Algorithm Initialization.} We initialize $W^{(0)} = G = Q^\top F \in \mathbb{R}^{n \times m}$, where $G_{ij} = q_i^\top f_j$ represents the initial correlation between the $i$-th query and $j$-th data point. At iteration $t=0$, we have $r_i^{(0)} = q_i$ for all $i \in [n]$, so $W_{ij}^{(0)} = r_i^{(0)\top} f_j = q_i^\top f_j = G_{ij}$.


\paragraph{Analysis of relaxation.} Although the greedy algorithm optimizes the relaxed objective, on controlled instances where we can enumerate the optimum, the linearization tracks the original objective (\ref{eq:linearized-trace}) well. (Appx.~C, Tab.~\ref{tab:lin-fidelity}).

The procedure is summarized in Algorithm \ref{alg:omp}. Notably, MP algorithms have been broadly studied since the seminal work of \citet{mallat1993matching}. Here, we contribute a new variation of MP for data selection.

\begin{algorithm}[t] 
\caption{Greedy matching pursuit (MP)}
\label{alg:omp}
\begin{algorithmic}[1]
\REQUIRE Data matrix $F \in \mathbb{R}^{d \times m}$, score matrix $G \in \mathbb{R}^{m \times n}$ (where $G = F^\top Q$), number of selections $k$
\ENSURE Selection set $S$
\STATE Initialize $S = \emptyset$, $W^{(0)} \leftarrow G^\top$ \COMMENT{$W \in \mathbb{R}^{n \times m}$}
\STATE Precompute $\Phi = F^\top F \in \mathbb{R}^{m \times m}$ \COMMENT{Data inner products}
\FOR{$t = 1$ to $k$}
    \STATE $s_t \leftarrow \displaystyle\arg\max_{j\in[m] \setminus S} \sum_{i=1}^n (W^{(t-1)}_{i,j})^2$ \COMMENT{Select best candidate}
    \STATE $S \leftarrow S \cup \{s_t\}$
    \FOR{$i = 1$ to $n$, $j = 1$ to $m$ with $j \notin S$}
        \STATE $W^{(t)}_{i,j} \leftarrow W^{(t-1)}_{i,j} - \Phi_{j,s_t} \cdot W^{(t-1)}_{i,s_t}$ \COMMENT{Update residuals}
    \ENDFOR
\ENDFOR
\RETURN Selection set $S$
\end{algorithmic}
\end{algorithm}

\subsection{Computational Complexity and Practical Costs}\label{sec:complexity}
The Greedy MP algorithm has a total runtime complexity of $O(m^2d + mnk)$, where $m$ is the total number of data points, $n$ is the number of scores, and $k$ is the desired subset size. This breaks down as follows: (1) Precomputation of $\Phi = F^\top F$ requires $O(m^2 d)$ operations, performed once; (2) Initialization of $W^{(0)} = G$ is $O(mn)$ if $G$ is provided, or $O(mnd)$ if computed from $Q^\top F$; (3) For each of the $k$ iterations, we select the best candidate ($O(mn)$) and update the residual matrix $W$ for $O(m)$ entries ($O(mn)$), giving $O(mnk)$ for the iterative process. The memory complexity is $O(m^2 + mn)$ for storing $\Phi$ and $W$. In practice, since $n$ is typically small, the algorithm scales nearly linearly with the dataset size $m$ after the initial precomputation, making it practical for large-scale data selection.

We summarize the selection-phase runtime and resource requirements of the proposed Greedy MP against common baselines used in instruction/data selection. The focus is on asymptotic behavior with respect to: $m$ (candidate pool size), $k$ (selected subset size), $d$ (embedding dimension), $m_{\text{val}}$ (validation set size), and $n$ (number of query/task score vectors; typically small). We contrast whether each method requires (a) a trained model for gradients/logits, and (b) an external validation subset.

\begin{table}[h]
    \centering
    \small
    \begin{tabular}{lcccc}
    	\toprule
    Method & Selection Time Complexity & Validation Data? & Model Training? \\ \midrule
    MP (Ours) & $O(m k)$ & No & No \\
    LESS  & $O(m\, m_{\text{val}}\, d)$ & Yes & Yes \\
    DISF  & $\tilde{O}(m k d^2)$ & Yes & Yes \\
    DSIR  & $O(m d + m_{\text{val}} d)$ & Yes & Yes \\
    \bottomrule
    \end{tabular}
    \caption{Selection complexity and requirements. For comparability with prior work, the table reports selection-phase complexity with $n$ treated as a small constant (typically $n=1$ in baseline settings), excluding the one-time precomputation cost. $\tilde{O}$ hides polylog factors.}
    \label{tab:complexity-summary}
\end{table}

Overall, the proposed methods yield strictly lower operational friction (no gradients, no held-out scoring) while matching or exceeding downstream performance (see Experiments). For detailed robustness and resource measurements (RAM usage / wall-clock), refer to Appendix~\ref{app:robustness}.

%% file: Experiments.tex
\section{Experiments} \label{section:experiments}

\subsection{Datasets and baseline models}
\textbf{Training data.} We use two training pools depending on the task. For instruction-tuning experiments evaluated on MT-Bench, BBH, and Alpaca-Eval-V2, we use the Alpaca-52k dataset, which contains 52,000 diverse instructions and demonstrations in English \citep{taori2023alpaca}. This dataset is commonly used as a benchmark instruction-tuning corpus; see, for example, \citep{zhao2024long,bukharinDataDiversityMatters2024,ge2024clustering}. In this setting, we apply Greedy MP to select training subsets of varying sizes: 512 samples (1\% of full data) and 1000 samples (2\% of full data). For mathematical reasoning experiments on GSM8K, we use the GSM8K training split as the fine-tuning pool and select subsets of 2.5\%, 5\%, 10\%, and 20\% of the full training set.

\textbf{Baseline methods.} We compare against several state-of-the-art data selection methods: \textbf{Random} (randomly selected subsets), \textbf{Full} (training on the complete dataset), \textbf{LIMA} \citep{zhao2024long} (length-based selection for instruction tuning), \textbf{Alpagasus} \citep{chen2023alpagasus} (quality-based filtering), \textbf{CaR} \citep{ge2024clustering} (clustering and ranking), \textbf{DSIR} \citep{Xie2023DSIR} (importance resampling for domain adaptation), \textbf{DISF} \citep{fan2025combattingdimensionalcollapsellm} (iterative informative selection), and \textbf{LESS} \citep{Xia2024LESS} (gradient-similarity-based selection). For both training tasks, we include the baselines that are most relevant and retrain all methods from scratch under identical experimental settings to ensure fair comparison.

\textbf{Base models.} For instruction-tuning experiments, we use Mistral-7B \citep{jiang2023mistral7b}, Qwen3-4B, and Qwen3-8B \citep{yang2025qwen3}. For GSM8K, we additionally evaluate Qwen3-32B \citep{yang2025qwen3}. Training details for the main fine-tuning setups are provided in \cref{tab:model_train_details}. 

\textbf{Evaluation datasets.} We evaluated trained models on four benchmark suites: MT-Bench \citep{zheng2023judging}, BIG-Bench Hard (BBH) \citep{Suzgun2022BBH}, GSM8K \citep{cobbe2021training}, and Alpaca-Eval-V2\footnote{\url{https://github.com/tatsu-lab/alpaca_eval}}. For MT-Bench, we follow the standard protocol \citep{zheng2023judging} where models generate responses with vLLM \citep{kwon2023efficient} for multi-turn conversations, and responses are scored by GPT-5\footnote{https://openai.com/index/introducing-gpt-5/} on a 1--10 scale. For BBH, we report accuracy against provided ground-truth answers. For GSM8K, models are fine-tuned on selected subsets of the GSM8K training split and evaluated on the held-out evaluation split using exact-match accuracy. For Alpaca-Eval-V2, we report the length-controlled win rate (LC win\%), a debiased preference metric that adjusts for response length effects \citep{dubois2024length}.

\subsection{Implementation} \label{section:Experiments-Implementation}
We run experiments with the Greedy MP algorithm (Algorithm \ref{alg:omp}) as follows:
\textbf{Data embeddings $F$.}
The Alpaca 52k dataset is a generic instruction-response dataset that includes training data with instruction, input (context), and answer (response) triplets. For each triplet, we concatenate the components into a single sentence $[\textit{Instruction; Context; Response}]$ as the input sentence. We then use pretrained ModernBERT-Base \citep{warner2024smarter} as the embedding model and use outputs from the final hidden representation layer as embeddings. The context length is set to 2048, aligned with the training context window length. We apply a similar setting to the GSM8K dataset, where we consider the question as the instruction with context. 

We consider two methods of generating score vectors: LLM evaluation and self-compression.

\textbf{Scores via LLM assessments.} We use GPT-4o\footnote{https://openai.com/index/hello-gpt-4o/} to generate evaluations from $0$ to $5$ in four criteria: coherence, helpfulness, accuracy, and difficulty, and use the total score across all four criteria as the score vector. The prompt is included in the supplementary material. Experiments using LLM quality assessments to obtain multi-attribute information are labeled as MP+MA.

\textbf{Scores via self-compression.} Given embeddings $F \in \mathbb{R}^{d \times m}$ where each column $f_i$ represents the embedding of sample $i$, we compute a quality score for each sample based on its alignment with the entire dataset. We calculate the dataset alignment score:
$$g_i = \sum_{j=1}^m f_i^\top f_j = (F^\top F \mathbf{1}_m)_i,$$
where $\mathbf{1}_m \in \mathbb{R}^m$ is the all-ones vector. This score, equivalent to the $i$-th row sum of the Gram matrix $F^\top F$, measures how representative each sample is of the overall dataset structure in embedding space. Samples with higher scores are more central to the data distribution and thus more informative for training. This approach provides an {\it internal} quality signal derived purely from data geometry, without requiring external supervision. In our framework, this yields a single-dimensional score vector ($n=1$) that balances diversity with representativeness. Experiments using this self-compression scoring are labeled as MP+SC.


\textbf{Baseline implementation details.} To ensure fair comparison, we retrain all baseline methods following the reported setups in the original papers. All baselines use identical experimental settings (see \cref{tab:model_train_details} for details). For consistency, we use the same data preprocessing, embedding models, and evaluation protocols across all methods. For Alpaca experiments, we use a budget of 1000 examples, following prior work \citep{zhao2024long,StanfordAlpacaInstructionfollowing2023,ge2024clustering}. For GSM8K, DSIR uses the same validation set as the target domain to compute importance scores, providing an oracle performance reference. For DISF, we start from a random set and iteratively add samples that maximize diversity within a batch. For LESS, we follow the original implementation to build gradient features and use the same validation set as for DSIR to select the most influential data.

All baselines use identical LoRA fine-tuning configurations, learning rates and training epochs specified in Table~\cref{tab:model_train_details}. This ensures that performance differences reflect data selection quality rather than training procedure variations.

\begin{table}[t]
\caption{Performance on MT-Bench, BBH, and Alpaca-Eval-V2. Alpaca-Eval-V2 reports length-controlled win rate (LC win\%, higher is better) \citep{dubois2024length}. Detailed results with standard errors are provided in Appendix Table~\ref{tab:mt-bench-bbh-detailed}. Best results are in \textbf{bold} and second best are \underline{underlined}.}
\label{tab:mt-bench-bbh-results}
\centering
\footnotesize
\begin{adjustbox}{width=\textwidth}
\begin{tabular}{lc|ccc|ccc|ccc}
\toprule
& & \multicolumn{3}{c|}{\textbf{Mistral-7B}} & \multicolumn{3}{c|}{\textbf{Qwen3-8B}} & \multicolumn{3}{c}{\textbf{Qwen3-4B}} \\
\textbf{Method} & \textbf{Data} & \textbf{MT-Bench} & \textbf{BBH} & \textbf{Alpaca-Eval} & \textbf{MT-Bench} & \textbf{BBH} & \textbf{Alpaca-Eval} & \textbf{MT-Bench} & \textbf{BBH} & \textbf{Alpaca-Eval} \\
\midrule
Full & 100\% (52K) & 3.64 & \textbf{58.4} & 5.77 & 7.74 & 76.7 & \underline{6.91} & 6.69 & 73.2 & \textbf{4.88} \\
Random & 2\% (1K) & 3.66 & 57.4 & 4.71 & 7.98 & 72.3 & 4.99 & 7.59 & 69.4 & 2.62 \\
LIMA & 2\% (1K) & \underline{3.89} & 55.6 & 2.69 & \underline{8.20} & 75.8 & 4.24 & 7.45 & 73.3 & 2.48 \\
Alpagasus & 2\% (1K) & 3.71 & 56.5 & 4.81 & 7.86 & 71.9 & 5.51 & 7.59 & 63.6 & 4.17 \\
CaR & 2\% (1K) & 3.83 & 57.4 & 4.49 & 8.16 & 76.2 & 3.66 & \underline{7.68} & 73.8 & 3.26 \\
\midrule
MP+MA & 2\% (1K) & 3.83 & 57.6 & \underline{6.22} & 7.99 & \textbf{80.5} & 5.64 & 7.67 & \underline{74.3} & 3.74 \\
MP+MA & 1\% (512) & \textbf{3.99} & 56.6 & \textbf{6.59} & \textbf{8.26} & \underline{79.1} & 4.85 & \textbf{7.78} & 68.0 & \underline{4.42} \\
MP+SC & 2\% (1K) & 3.71 & \underline{57.7} & 4.83 & 8.04 & 79.1 & \textbf{7.05} & 7.53 & \textbf{74.4} & 3.34 \\
MP+SC & 1\% (512) & 3.71 & 57.4 & 4.85 & 7.88 & 76.5 & 4.72 & 7.58 & 73.9 & 3.23 \\
\bottomrule
\end{tabular}
\end{adjustbox}
\end{table}

\begin{table}[t]
\caption{GSM8K performance across models and data percentages.}
\label{tab:GSM8K_merged}
\centering
\footnotesize
\begin{adjustbox}{width=\textwidth}
\begin{tabular}{l|cccc|cccc|cccc}
\toprule
& \multicolumn{4}{c|}{\textbf{Qwen3-8B}} & \multicolumn{4}{c|}{\textbf{Mistral-7B}} & \multicolumn{4}{c}{\textbf{Qwen3-32B}} \\
\textbf{Method} & \textbf{2.5\%} & \textbf{5\%} & \textbf{10\%} & \textbf{20\%} & \textbf{2.5\%} & \textbf{5\%} & \textbf{10\%} & \textbf{20\%} & \textbf{2.5\%} & \textbf{5\%} & \textbf{10\%} & \textbf{20\%} \\
\midrule
Full & \multicolumn{4}{c|}{81.96} & \multicolumn{4}{c|}{52.46} & \multicolumn{4}{c}{87.64} \\
\midrule
Random & 78.32 & 80.21 & 79.53 & 81.65 & 37.76 & \underline{43.52} & 45.72 & 46.74 & 78.24 & 87.14 & 85.67 & 87.29 \\
DISF & 75.06 & 79.98 & 79.98 & 80.67 & \underline{39.88} & 41.55 & 41.55 & 42.46 & 81.65 & 87.04 & 87.04 & \underline{88.40} \\
DSIR & \underline{80.74} & \textbf{81.50} & \underline{81.96} & \textbf{83.24} & 37.07 & 42.15 & 42.61 & 42.08 & 80.74 & \textbf{89.16} & 87.79 & 88.70 \\
LESS & 79.76 & 79.45 & 80.29 & 79.45 & 31.84 & 33.13 & 42.91 & \underline{47.69} & 69.98 & 87.26 & \underline{87.95} & 87.41 \\
\midrule
MP+MA & \textbf{81.58} & \underline{81.05} & \textbf{83.09} & \textbf{83.24} & \textbf{42.99} & \textbf{45.64} & \underline{45.94} & \underline{47.46} & \textbf{84.31} & \underline{87.57} & \textbf{88.25} & \textbf{88.78} \\
MP+SC & 80.36 & 80.21 & 81.65 & \underline{82.26} & 38.89 & 43.29 & \textbf{46.63} & \textbf{49.81} & \underline{84.15} & 87.04 & 87.34 & 87.56 \\
\bottomrule
\end{tabular}
\end{adjustbox}
\end{table}

\subsection{Main Results} 
Table \ref{tab:mt-bench-bbh-results} presents a comparison between Greedy MP methods and baseline approaches across three model architectures on MT-Bench, BBH, and Alpaca-Eval-V2. The results demonstrate the effectiveness of our proposed methods across different model scales and evaluation scenarios.

\textbf{Sample efficiency.} Across both instruction-tuning on Alpaca and verifiable math reasoning on GSM8K, Greedy MP delivers its largest gains in the low-data regime, where each training example has outsized influence. In this setting, MP-based selection consistently outperforms random subsampling and often narrows much of the gap to full-data training using only a small fraction of the data. This trend holds across both preference-style evaluation (MT-Bench, Alpaca-Eval-V2) and accuracy-style evaluation (BBH, GSM8K), indicating that the benefit is not tied to a single metric family. 

\textbf{Robustness across models and benchmarks.} The improvements are consistent across model families and sizes (Mistral and Qwen), and persist across complementary benchmark suites that probe different capabilities: instruction quality and helpfulness (MT-Bench), general reasoning (BBH), length-controlled preferences (Alpaca-Eval-V2), and strict verifiable reasoning (GSM8K). While oracle-style methods such as DSIR can excel at the very smallest budgets by leveraging target-domain validation information, Greedy MP remains competitive without requiring such oracle access, and its gains are stable as the budget increases.

Finally, we attribute these improvements to optimizing an objective that explicitly balances \emph{quality} and \emph{diversity}: the selection favors informative, high-signal examples while avoiding redundancy, which is especially critical under tight data budgets. The two instantiations reinforce this picture. MP+MA uses external multi-attribute assessments that align well with instruction quality and preference judgments, whereas MP+SC uses an internal, geometry-based signal that emphasizes representativeness; the two can therefore be complementary across benchmarks. 

\subsubsection{Impact of data quality on MP+SC} \label{section:ablation}
We study data sources of varying quality using the cleaned Alpaca dataset\footnote{https://huggingface.co/datasets/yahma/alpaca-cleaned} and a higher-quality variant filtered or rewritten from Alpaca \citep{alpaca}. Figure~\ref{fig:mtbench_mistral_per_category} breaks down MT-Bench categories on Mistral-7B, comparing MP+SC (1\% of data) against the Full-data baseline. MP+SC largely tracks Full in applied and interactive categories, while the most persistent gap is concentrated in Math and occasionally Extraction, indicating broad capability coverage. Data cleansing shifts both methods upward, with the largest gains in open-ended categories, notably Humanities and Writing, and also Math and STEM, while changes in Reasoning and Extraction are smaller. This pattern suggests that MP+SC captures the quality signal and translates it into robust performance improvements across categories, especially those that are more open-ended and subjective in nature.
\begin{figure}[t]
	\centering
	\includegraphics[width=0.98\linewidth]{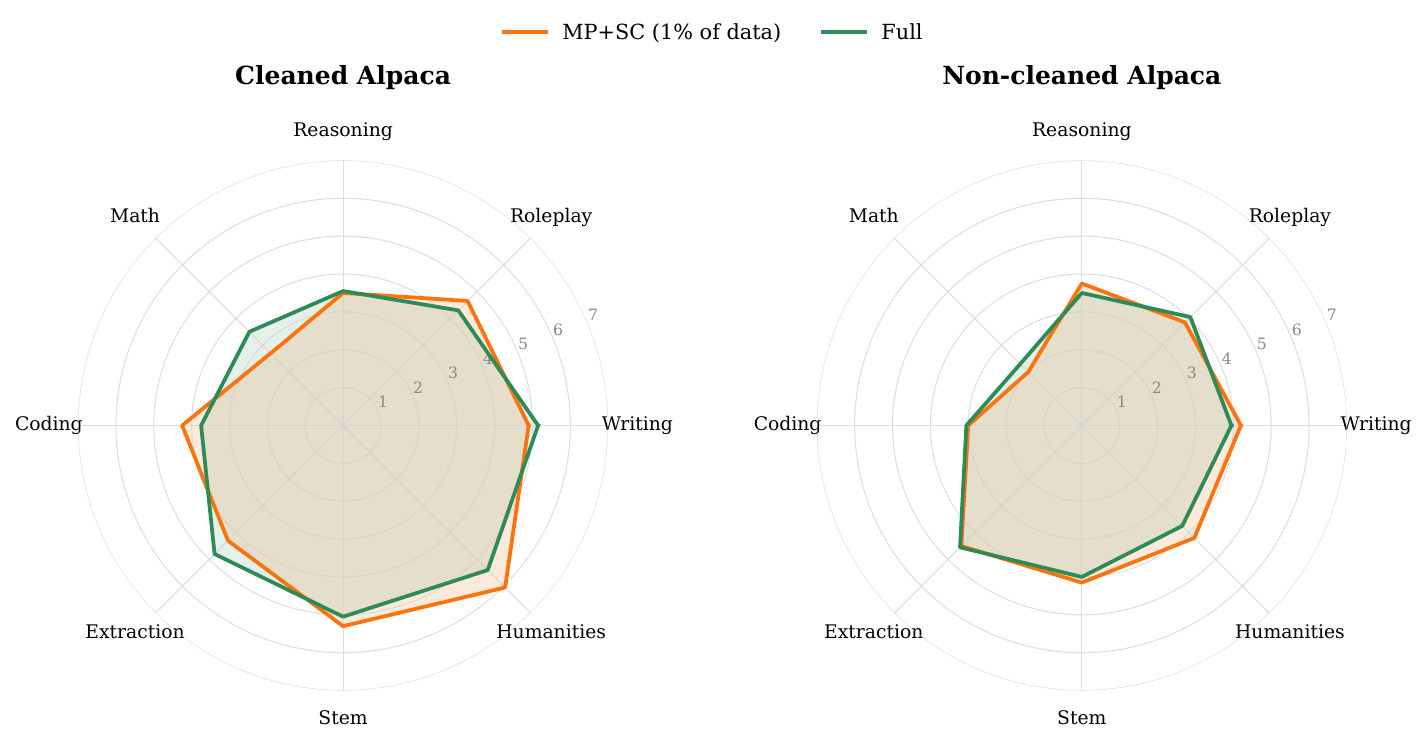}
	\caption{MT-Bench per-category average scores on Mistral-7B under cleaned vs non-cleaned Alpaca, comparing \emph{only} MP+SC (1\% of data) against the Full-data baseline. Scores are computed from GPT-5 judgments.}
	\label{fig:mtbench_mistral_per_category}
\end{figure}

\subsubsection{Embedding Ablations}\label{section:embedding-signal-ablation}
We ablate the choice of embeddings feeding MP+SC on GSM8K, comparing a general-purpose encoder (Modern-BERT) versus a specialized reasoning encoder (Qwen-reasoning). We report accuracies for two budgets per base model. Larger, reasoning-specialized embeddings yield small but consistent gains at the same data budget, particularly for Qwen3-32B at 10\%–20\%. Modern-BERT remains competitive, especially on Mistral-7B at 10\%. See Section~\ref{sec:complexity} for the corresponding selection-phase resource comparison across methods.

\begin{table}[t]
\caption{Embedding ablation for MP+SC on GSM8K: Modern-BERT vs Qwen-reasoning.}
\label{tab:embedding_ablation}
\centering
\footnotesize
\begin{tabular}{l|c|c|c}
\toprule
	{Base Model} & \textbf{Data Size} & \textbf{Modern-BERT} & \textbf{Qwen-reasoning} \\
\midrule
Mistral-7B & 20\% (1494) & 49.81\% & \textbf{50.27\%} \\
Mistral-7B & 10\% (747) & \textbf{46.63\%} & 46.25\% \\
Qwen3-32B & 20\% (1494) & 87.57\% & \textbf{88.02\%} \\
Qwen3-32B & 10\% (747) & 87.34\% & \textbf{87.72\%} \\
\bottomrule
\end{tabular}
\end{table}

\paragraph{Selection Stability}\label{sec:stability}
To assess robustness to representation perturbations, we inject zero-mean Gaussian noise into GSM8K embeddings with $\sigma \in \{10^{-4},10^{-3},10^{-2}\}$ and rerun MP+SC for 10\% (747) and 20\% (1494) budgets over three trials per noise level. Intersection-over-Union (IoU) of selected subsets remains $\geq85\%$ for $\sigma \le 10^{-3}$ and stays above 60\% even at $\sigma = 10^{-2}$, indicating that mild embedding drift has negligible impact on which samples are chosen while severe noise still preserves a majority of core items. Detailed setup and the full table are provided in Appendix~\ref{app:stability}.

%% file: Conclusion.tex
\section*{Conclusion and Future Work}

We introduced \emph{Greedy Information Projection} (\textsc{GIP}),
an information-theoretic framework for LLM data selection that
unifies quality and diversity through a single mutual information
objective. This formulation yields a closed-form criterion together
with an efficient greedy matching-pursuit algorithm that scales
linearly with dataset size after precomputation, without gradient
computation or external validation sets. Across instruction-following
benchmarks (MT-Bench, BBH, Alpaca-Eval-V2) and mathematical reasoning
(GSM8K), \textsc{GIP} delivers substantial data-efficiency gains:
with only 1--20\% of the training data, it consistently matches or
exceeds full-dataset performance. The two variants are complementary:
MP+MA is strongest on instruction-following and preference-style
metrics, whereas MP+SC achieves strong reasoning performance without
external supervision.

\textbf{Future directions.} Natural extensions include scaling to
pretraining-scale datasets through distributed or streaming variants,
extending the framework beyond Gaussian assumptions to handle noisy
quality signals, adapting it to multimodal and continual-learning
settings, and developing formal approximation guarantees and
finite-sample analyses for the greedy algorithms.

%% file: supplementary.tex
\newpage
\appendix

\section*{Appendix}

We include proofs of main theorems, additional ablation studies, robustness and resource analyses, and a discussion on statistical significance in the appendix.

\section{Robustness and Resource Analysis}\label{app:robustness}
\subsection{Robustness across data percentages and model sizes}
We summarize sensitivity across data budgets (2.5\%, 5\%, 10\%, 20\%) and model sizes (7B, 8B, 32B). MP+MA maintains strong gains at low budgets; MP+SC is consistently competitive without external scores. Detailed per-budget tables are provided alongside standard errors in \cref{tab:GSM8K-detailed}.

\subsubsection{Selection Stability under Embedding Noise}\label{app:stability}
We probe stability by adding zero-mean Gaussian noise to GSM8K embeddings prior to running MP+SC. We evaluate three noise levels $\sigma \in \{10^{-4},10^{-3},10^{-2}\}$ and perform three independent trials per level for two budgets: 10\% (747 samples) and 20\% (1494 samples). We report the mean Intersection-over-Union (IoU) of selected subsets (\% units) with $\pm$ standard deviation.

\begin{center}
\begin{tabular}{l|c|c}
	\toprule
	\textbf{Noise Level ($\sigma$)} & \textbf{Data Size 747 (10\%)} & \textbf{Data Size 1494 (20\%)} \\
\midrule
1e-4 & 95.89 $\pm$ 0.32 & 91.72 $\pm$ 0.36 \\
1e-3 & 94.20 $\pm$ 0.66 & 87.85 $\pm$ 0.39 \\
1e-2 & 66.32 $\pm$ 1.13 & 61.74 $\pm$ 0.29 \\
\bottomrule
\end{tabular}
\end{center}

Overall, we observe (i) High stability across mild noise: at $\sigma\le 10^{-3}$, IoU $\ge$ 85\% indicates minor embedding variations have minimal impact on selection outcomes; (ii) Graceful degradation under strong noise: even at $\sigma=10^{-2}$, IoU remains above 60\%, suggesting that core high-quality diverse samples are preserved. These results support the practical stability of our information projection framework and explain why downstream training remains robust under moderate embedding perturbations. See the main text summary in Section~\ref{sec:stability}.
We discuss practical considerations for Gram matrix usage. For large $m$ and $d$, we use blockwise products to avoid materializing $F^\top F$ fully and maintain peak memory within CPU limits. Selection-phase timing curves indicate near-linear scaling in $k$ for Greedy MP. We also report typical RAM footprints on AMD CPU for representative $m,d$ configurations.

\subsection{Runtime and Memory Performance}
We provide detailed wall-clock and peak memory profiling for the Gram (inner product) matrix preprocessing and the subsequent greedy selection phase. Measurements are taken on a single AMD CPU machine without sharding or distributed execution. Preprocessing refers to computing the Gram matrix (or equivalent blockwise products); selection refers to running Greedy MP for the indicated subset budget.

\begin{table}[h]
\centering
\footnotesize
\caption{Runtime and memory profiling for preprocessing (Gram matrix computation) and selection. Preprocessing is a one-time cost that can be amortized across multiple selection runs with different budgets or scoring signals.}
\label{tab:runtime_memory}
\begin{tabular}{lccccc}
	\toprule
	\textbf{Dataset} & \textbf{Pool Size $m$} & \textbf{Subset \%} & \textbf{Preproc. / Selection Time (s)} & \textbf{Peak RAM (Preproc. / Select.)} \\
\midrule
GSM8K & 7k & 10\% & 7.01 / 0.86 & 0.3 GB / 0.04 GB \\
GSM8K & 7k & 20\% & 7.01 / 0.87 & 0.3 GB / 0.04 GB \\
GSM8K & 7k & 50\% & 7.01 / 0.87 & 0.3 GB / 0.04 GB \\
Alpaca & 52k & 10\% & 214.11 / 25.12 & 10.8 GB / 0.3 GB \\
Alpaca & 52k & 20\% & 214.11 / 49.67 & 10.8 GB / 0.3 GB \\
Alpaca & 52k & 50\% & 214.11 / 123.62 & 10.8 GB / 0.3 GB \\
\bottomrule
\end{tabular}
\end{table}

\textbf{Findings.} (i) \emph{Gram matrix computation is amortized:} the $O(m^2 d)$ preprocessing takes $\sim$ 7s for GSM8K and $\sim$ 214s for Alpaca but is reused across budgets and score variants. (ii) \emph{Linear selection scaling:} selection times grow roughly linearly with $k$ (subset size), consistent with $O(mk)$ complexity. (iii) \emph{Memory efficiency:} peak RAM is dominated by preprocessing; selection adds negligible overhead. (iv) \emph{Practical applicability:} On typical instruction-tuning corpus sizes, single-machine execution is feasible; for substantially larger $m$ we can employ low-rank approximations or distributed blockwise multiplication. We leave this as future work.

\section{Ethics / Broader Impact}
This work studies data selection for language-model fine-tuning. It uses public datasets under their original licenses. 
\textbf{Potential risks.} (i) \emph{Bias amplification:} selection could over-represent dominant clusters and under-sample minority or rare phenomena, potentially reducing fairness; (ii) \emph{Safety drift:} subsets might preserve harmful or factually incorrect content; (iii) \emph{LLM-as-judge bias:} MT-Bench relies on automated judges that may encode hidden preferences.
\textbf{Mitigations.} We (a) normalize and de-duplicate embeddings, (b) allow $\epsilon$-regularization and diversity-aware gains to reduce over-concentration, (c) apply the original dataset safety filters. 
\textbf{Intended use.} Our method targets research/industrial model training where compute and data curation are constrained. It should not be used to curate content intended to target protected groups or to optimize for harmful outputs. 
\textbf{Transparency.} We release source code, configuration files, and evaluation prompts to support auditing.

\section{Reproducibility}
We release code and configuration files to reproduce our main algorithm. 
\textbf{Data and licenses.} We use Alpaca-52k (CC BY-NC 4.0) and the GSM8K training split as training data, and MT-Bench, BBH, Alpaca-Eval-V2, and the held-out GSM8K evaluation split as benchmarks; links and license terms are listed in the README.
\textbf{Environment.} Experiments were run on NVIDIA A100 80GB and V100 32GB GPUs, depending on the model and experiment. We provide a \texttt{conda} YAML and exact package versions.
\textbf{Training.} We report the shared fine-tuning recipe used across the main runs, covering Mistral-7B and the Qwen3 family (Qwen3-4B, Qwen3-8B, and Qwen3-32B), in \cref{tab:model_train_details}. We also report the effective token budgets (1–20\%) and the corresponding evaluation settings.
\textbf{Selection.} Our implementation exposes the Greedy-MP procedure used in the main paper.  
\textbf{Evaluation.} We reuse the prompt as in \citep{zheng2023judging} and \citep{eval-harness} for GSM8K/BBH (exact-match normalization), and MT-Bench judge templates. For all tasks, we report results as mean $\pm$ standard error across data instances and random seeds.
\textbf{Determinism.} We fix seeds at all stages (data loader, CUDA/cuDNN, model training).

\section{Regularization Details} \label{app:regularization}

This section provides the complete mathematical treatment of our regularization approach referenced in Section~\ref{section:formulation}.

\subsection{Motivation for Regularization}

When GIP is applied without approximation, the constraint $F^\top Q = G$ requires solving for $Q$ given feature matrix $F \in \mathbb{R}^{d \times m}$ and score matrix $G \in \mathbb{R}^{m \times n}$. In practical scenarios:
\begin{itemize}
    \item $m \gg d$ (many more data points than feature dimensions)
    \item $F$ typically has row-rank $r \le d$ but not full column rank
    \item $(F^\top F)^{-1}$ may not exist or be ill-conditioned
\end{itemize}

\subsection{Tikhonov Regularization}

To handle rank-deficiency and improve numerical stability, we adopt standard Tikhonov regularization:
\[
\Sigma_\epsilon \;:=\; F^\top F + \epsilon I_m, \qquad \epsilon>0.
\]

We define the regularized left pseudoinverse:
\[
F_\epsilon^+ := F^\top (F F^\top + \epsilon I_d)^{-1}
\]

and construct the regularized query coefficient matrix:
\[
Q_\epsilon \;:=\; (F_\epsilon^+)^\top G.
\]

\subsection{Properties of the Regularized Solution}

The regularized query matrix $Q_\epsilon$ has several important properties:

\paragraph{Unique minimizer.} $Q_\epsilon$ is the unique minimizer of the regularized least squares problem:
\[
\min_Q \|F^\top Q - G\|_F^2 + \epsilon \|Q\|_F^2.
\]

\paragraph{Controlled approximation error.} The regularization introduces a bounded approximation error:
\[
\|F^\top Q_\epsilon - G\|_F \le \epsilon \|Q_\epsilon\|_F.
\]

\paragraph{Numerical stability.} For any $\epsilon > 0$, both $\Sigma_\epsilon$ and $F F^\top + \epsilon I_d$ are positive definite and hence invertible.

\subsection{Implementation in Main Results}

With this regularization framework:
\begin{itemize}
    \item All occurrences of $(F^\top F)^{-1}$ in our derivations are replaced by $\Sigma_\epsilon^{-1}$
    \item Identities that relied on $F^\top Q = G$ hold approximately: $F^\top Q_\epsilon \approx G$
    \item As $\epsilon \to 0$, we recover the original formulation when $(F^\top F)^{-1}$ exists
    \item The approximation quality can be controlled by choosing appropriate $\epsilon$.
\end{itemize}

\section{Proofs}
\subsection{Proof of Theorem \ref{thm:thm-MI}}
\begin{proof}
We start by expanding \eqref{eq:mutual-inf-def} via
\begin{align} \label{eq:mutual-inf-argmax}
    \arg \max_S I(Z_Q; Z_{F_S}) &= \arg \max_S \left( \log \det\left(\Sigma_{F_S}\right) - \log \det \left(\Sigma_{F_S} - \Sigma_{Q, F_S} \Sigma_Q^{-1} \Sigma_{Q, F_S}^\top\right) \right).
\end{align}
The following generalized matrix determinant lemma obtained as an extension of the Woodbury Identity \citep{woodbury1950inverting} allows us to break down \eqref{eq:mutual-inf-argmax}.

\begin{lemma}[Matrix Determinant Lemma, \cite{woodbury1950inverting}]
\label{lemma:matrix-determinant-lemma}
Suppose $A \in \mathbb{R}^{n \times n}$ is invertible. Then, for any matrices $U, V \in \mathbb{R}^{n\times m}$,
\begin{align*}
\det(A + UV^\top) = \det (I + V^\top A^{-1} U) \det(A).
\end{align*}
\end{lemma}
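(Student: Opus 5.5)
The statement to prove is the generalized matrix determinant lemma, $\det(A+UV^\top)=\det(I_m+V^\top A^{-1}U)\det(A)$ for invertible $A\in\mathbb{R}^{n\times n}$ and $U,V\in\mathbb{R}^{n\times m}$. The plan has two steps. First, factor out $A$ to reduce to the case $A=I_n$. Second, prove the identity $\det(I_n+XY^\top)=\det(I_m+Y^\top X)$ (Sylvester's determinant identity) by a block-matrix elimination argument.

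\textbf{Step 1 (reduction).} Since $A$ is invertible, write $A+UV^\top = A\,(I_n + A^{-1}UV^\top)$. Multiplicativity of the determinant gives
\[
\det(A+UV^\top)=\det(A)\,\det(I_n + (A^{-1}U)V^\top).
\]
Setting $X=A^{-1}U$ and $Y=V$, it remains to show $\det(I_n+XY^\top)=\det(I_m+Y^\top X)$, since $Y^\top X = V^\top A^{-1}U$.

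\textbf{Step 2 (Sylvester's identity).} Consider the $(n+m)\times(n+m)$ block matrix
\[
M=\begin{bmatrix} I_n & -X \\ Y^\top & I_m\end{bmatrix}.
\]
Compute $\det M$ in two ways by multiplying $M$ by unit-triangular block matrices, which have determinant $1$.

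\emph{Eliminating the lower-left block.} Left-multiply by $\begin{bmatrix} I_n & 0\\ -Y^\top & I_m\end{bmatrix}$. The result is $\begin{bmatrix} I_n & -X\\ 0 & I_m+Y^\top X\end{bmatrix}$, so
\[
\det M=\det(I_m+Y^\top X).
\]

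\emph{Eliminating the upper-right block.} Right-multiply by $\begin{bmatrix} I_n & X\\ 0 & I_m\end{bmatrix}$. The result is $\begin{bmatrix} I_n & 0\\ Y^\top & I_m+Y^\top X\end{bmatrix}$, which only reproduces the same value. Instead, left-multiply $M$ by $\begin{bmatrix} I_n & X\\ 0 & I_m\end{bmatrix}$. This gives $\begin{bmatrix} I_n+XY^\top & 0\\ Y^\top & I_m\end{bmatrix}$, so
\[
\det M=\det(I_n+XY^\top).
\]

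Equating the two expressions for $\det M$ and combining with Step 1 yields the lemma.

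The only delicate point is choosing the elimination matrices so that each product is block-triangular with the desired Schur complement on the diagonal. The block-triangular determinant formula, $\det$ equals the product of the determinants of the diagonal blocks, is standard. No invertibility of $I_n+XY^\top$ or $I_m+Y^\top X$ is needed, so the identity holds for all $U,V$.
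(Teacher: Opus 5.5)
Your proof is correct. The factorization $A+UV^\top=A(I_n+A^{-1}UV^\top)$ uses only invertibility of $A$. Both block products check out: the lower-left elimination gives $\det M=\det(I_m+Y^\top X)$, and left-multiplying by $\begin{bmatrix} I_n & X\\ 0 & I_m\end{bmatrix}$ gives $\det M=\det(I_n+XY^\top)$.

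The paper itself gives no derivation. It states the lemma with a citation \cite{woodbury1950inverting}, describes it as an extension of the Woodbury identity, and uses it as a black box in the proof of Theorem~\ref{thm:thm-MI}. Where the paper relies on the literature, you supply a self-contained argument that needs only multiplicativity of $\det$ and the block-triangular determinant formula.

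What your route buys is generality with visible hypotheses: you use nothing beyond invertibility of $A$. A derivation routed through the Woodbury inverse formula would additionally need $I_m+V^\top A^{-1}U$, equivalently $A+UV^\top$, to be invertible. This matters in the paper's application. There $A=Q^\top Q$, $U=Q^\top F_S(F_S^\top F_S)^{-1}$ and $V=-Q^\top F_S$, so $A+UV^\top=Q^\top\bigl(I-F_S(F_S^\top F_S)^{-1}F_S^\top\bigr)Q$. That matrix is singular whenever a nonzero vector in the column span of $Q$ lies in the span of the selected columns $F_S$. The citation buys only brevity.

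For a final write-up, delete the abandoned right-multiplication attempt. You can also merge your two steps by applying the same two eliminations directly to $\begin{bmatrix} A & -U\\ V^\top & I_m\end{bmatrix}$. Its two Schur-complement factorizations give $\det(A)\det(I_m+V^\top A^{-1}U)$ and $\det(A+UV^\top)$ respectively.
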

\noindent
Then, the terms within the maximum in \eqref{eq:mutual-inf-argmax} become
\begin{align}
    -\log \det \left(I - \Sigma_{Q,F_S} \Sigma_Q^{-1} \Sigma_{Q,F_S}^\top \Sigma_{F_S}^{-1} \right) &= -\log \det \left(I - F^\top Q (Q^\top Q)^{-1} Q^\top F (F^\top F)^{-1} \right) \nonumber \\
    &= -\log \det\left(I + V^\top A^{-1} U\right) \label{eq:MDL}
\end{align}
where $V = -Q^\top F_S$, $A = Q^\top Q$ and $U=Q^\top F_S (F_S^\top F_S)^{-1}$.
Then, we can directly apply Lemma \ref{lemma:matrix-determinant-lemma} to get
\begin{align} \label{eq:MDL-after}
    \eqref{eq:MDL} = -\log \det \left(A + UV^\top\right) \det\left(A^{-1}\right).
\end{align}
Combining \eqref{eq:mutual-inf-argmax}, \eqref{eq:MDL} and \eqref{eq:MDL-after} yields
\begin{align} \label{eq:arg-min--MI}
    \arg \max_S I(Z_Q; Z_{F_S}) &= \arg \min_S \log \det\left(Q^\top \left(I - F_S (F_S^\top F_S)^{-1} F_S^\top \right) Q \right).
\end{align}
\end{proof}

\subsection{Theorem for quality preservation}
\begin{theorem} \label{thm:preserve-quality}
    When data is mutually orthogonal, i.e., $f_i^\top f_j = 0$ for all $i\neq j$, and $G$ consists of a single score vector $g_1\in\mathbb{R}^m$, the maximizing solution $S$ of \eqref{MaxMIObj} with $|S|=k$ is a solution of
    \begin{align} \label{eq:argmax-G}
        \arg \max_{S:|S|=k} G_S^\top G_S.
    \end{align}
    In other words, solving \eqref{eq:argmax-G} returns the indices of the top $k$ scores with largest absolute value.
\end{theorem}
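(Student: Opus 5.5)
Our starting point is \cref{thm:thm-MI}: we minimize $\det(Q^\top P_S Q)$ with $P_S = I - F_S(F_S^\top F_S)^{-1}F_S^\top$. With a single score vector we have $n=1$, so $Q=q_1=:q\in\mathbb{R}^d$ and the determinant is the scalar $q^\top P_S q = \|q\|_2^2 - q^\top F_S(F_S^\top F_S)^{-1}F_S^\top q$. Since $\|q\|_2^2$ does not depend on $S$, minimizing the determinant is equivalent to maximizing the quadratic form $q^\top F_S(F_S^\top F_S)^{-1}F_S^\top q$, which is the squared length of the projection of $q$ onto $\mathrm{span}(F_S)$.

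Next we use the hypotheses. The embeddings are normalized, $\|f_i\|_2=1$, and mutually orthogonal, so $F_S^\top F_S = I_{|S|}$ for every $S$. The projection therefore reduces to $F_SF_S^\top$, and the objective becomes $q^\top F_SF_S^\top q = \|F_S^\top q\|_2^2 = \sum_{j\in S}(f_j^\top q)^2$. We then identify $F_S^\top q$ with $G_S$, the restriction of $g_1=F^\top q$ to the rows indexed by $S$. This gives $G_S^\top G_S=\sum_{j\in S} g_{1,j}^2$, so the two argmax problems coincide over $\{S:|S|=k\}$. The objective is a sum of nonnegative per-index terms $g_{1,j}^2$, so it is maximized by taking the $k$ indices with the largest $|g_{1,j}|$, with ties broken arbitrarily. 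This yields the final claim.

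The main obstacle is the step where we identify $F^\top q$ with $g_1$, because $Q$ is defined only approximately through Tikhonov regularization. Orthogonality helps here. Orthonormal columns force $m\le d$, so $F^\top F=I_m$ and $F$ has full column rank. Then $Q_\epsilon = F(F^\top F+\epsilon I)^{-1}G = F G/(1+\epsilon)$, hence $F^\top Q_\epsilon = G/(1+\epsilon)$. This is exactly $G$ up to a positive scalar, and scaling does not change the argmax. We may therefore take $\epsilon\to0$, or simply note that $Q=FG$ solves $F^\top Q=G$ exactly. A second, minor caveat concerns the log-determinant form in \cref{thm:thm-MI}: if $q\in\mathrm{span}(F_S)$ the determinant vanishes and the MI is infinite. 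This degenerate case is still consistent with the ordering, since it corresponds to the maximal value of $\sum_{j\in S}g_{1,j}^2=\|q\|^2$.
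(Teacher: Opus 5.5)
Your proposal is correct and follows the paper's own argument. Normalization and orthogonality give $F_S^\top F_S = I_k$, so the objective collapses to $\|q_1\|_2^2 - q_1^\top F_S F_S^\top q_1$, and $F_S^\top q_1 = G_S$ then yields the top-$k$ selection. Your additional remarks, that $Q_\epsilon$ matches $G$ exactly up to the factor $1/(1+\epsilon)$ in the orthonormal case and the degenerate infinite-MI case, are valid refinements that the paper's proof leaves implicit.
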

\begin{proof}
    Based on the assumption that $F$ is normalized, the property $f_i^\top f_j = 0$ for all $i\neq j$ yields $F^\top F = I_m$ and $F_S^\top F_S = I_k$ for any selection $S$ with $|S|=k$. Consequently,
    \begin{align*}
        &\arg \min_S \log \det \left(Q^\top \left(I - F_S (F_S^\top F_S)^{-1} F_S^\top \right) Q \right) = \arg \min_S \left( |q_1|^2  - q_1^\top F_S F_S^\top q_1 \right).
    \end{align*}
    Recall the definition of $G:=F^\top q_1$ and $G_S:=F_S^\top q$, and the desired result follows.
\end{proof}

\subsection{Proof of Theorem \ref{thm:quality-bounds}}
We restate the theorem as follows: 
If mutual information objective is maximized with optimal set $S^*$ in Theorem \ref{thm:thm-MI}, then there exists a $\delta_{S^*}$ such that 
\begin{equation}
    \|F_S^\top Q\|_2 \geq \delta_{S^*}\sqrt{1 - \left(\frac{\eta}{\det(Q^\top Q)}\right)^{1/r}},
\end{equation}
where $\eta = \det\!\big(Q^\top (I - P_{F_{S^{*}}}) Q\big).$ and $r$ is the row rank of $Q$

\begin{proof}
Take reduced QRs with orthonormal bases:
\begin{equation}
Q = U R_Q, \quad F_S = V R, \quad S = V^\top U, \quad P_{F_S} = V V^\top,
\end{equation}
where $U \in \mathbb{R}^{n \times r}$, $V \in \mathbb{R}^{n \times t}$ have orthonormal columns and $R_Q \in \mathbb{R}^{r \times r}$, $R \in \mathbb{R}^{t \times t}$ are invertible. Define
\begin{equation}
\eta(S) = \det\!\big(Q^\top (I - P_{F_S}) Q\big).
\end{equation}

\textbf{Reduce to $S$}. Using $Q = UR_Q$ and $P_{F_S} = VV^\top$,
\begin{equation}
Q^\top(I - P_{F_S})Q = R_Q^\top (I - S^\top S) R_Q \implies \eta = \det(Q^\top Q) \det(I - S^\top S).
\end{equation}
If the eigenvalues of $S^\top S$ are $\{\sigma_i(S)^2\}_{i=1}^p$ (with $p = \min\{r,t\}$) and we pad $\sigma_i(S) = 0$ for $i > p$, then
\begin{equation} \label{eigen_val_identity}
\frac{\eta}{\det(Q^\top Q)} = \prod_{i=1}^{r} \big(1 - \sigma_i(S)^2\big).
\end{equation}

Let $a_i = 1 - \sigma_i(S)^2 \in [0,1]$. If all $a_i > g = (\eta/\det(Q^\top Q))^{1/r}$, then 
\begin{equation}
\prod_{i=1}^r a_i > g^r,
\end{equation}
contradicting the identity \ref{eigen_val_identity} above. Hence some $i^*$ satisfies $a_{i^*} \leq g$, i.e.,
\begin{equation}
\|S\|_2^2 = \sigma_{\max}(S)^2 \geq 1 - \left(\frac{\eta}{\det(Q^\top Q)}\right)^{1/r}.
\end{equation}

Since $S = V^\top U$, we obtain the main results:
\begin{equation}
\|V^\top U\|_2^2 \geq 1 - \left(\frac{\eta}{\det(Q^\top Q)}\right)^{1/r}
\end{equation}

When $Q^\top Q = I_r$:
\begin{equation}
\|V^\top U\|_2^2 \geq 1 - \eta^{1/r}
\end{equation}

With $F_S^\top Q = R^\top S R_Q$ and using the inequality $\|ABC\|_2 \geq \sigma_{\min}(A)\|B\|_2\sigma_{\min}(C)$:
\begin{align}
\|F_S^\top Q\|_2 &\geq \sigma_{\min}(F_S)\sigma_{\min}(Q)\|V^\top U\|_2 \\
&\geq \sigma_{\min}(F_S)\sigma_{\min}(Q)\sqrt{1 - \left(\frac{\eta}{\det(Q^\top Q)}\right)^{1/r}},
\end{align}
which simplifies to 
\begin{equation}
\|F_S^\top Q\|_2 \geq \sigma_{\min}(F_S)\sqrt{1 - \eta^{1/r}}
\end{equation}
when $Q^\top Q = I_r$.
\end{proof}

\section{Experiment Details}
\subsection{Datasets and Benchmarks}\label{app:datasets-benchmarks}
\textbf{Training corpora.} We use two training pools, depending on the task. For instruction-tuning experiments evaluated on MT-Bench, BBH, and Alpaca-Eval-V2, we use Alpaca-52k \citep{taori2023alpaca}, a 52k English instruction-response corpus generated with text-davinci-003. It is the source dataset for our selected subsets and for the Alpaca-based baselines; for these experiments, we consider 1\% and 2\% subset budgets. For verifiable math reasoning, we use the GSM8K training split \citep{cobbe2021training} as the selection and fine-tuning pool, with subset budgets of 2.5\%, 5\%, 10\%, and 20\% of the full training set. Alpaca-52k is released under CC BY-NC 4.0.

\textbf{Derived training subsets and baselines.} In addition to our own selected subsets, we compare against two widely used Alpaca-derived baselines. CaR-1k \citep{ge2024clustering} selects 1,000 examples from Alpaca-52k using clustering and ranking to encourage quality and diversity. Alpagasus-1k \citep{chen2023alpagasus} selects 1,000 examples from Alpaca-52k using LLM-based quality filtering. 

\textbf{Evaluation benchmarks.} We evaluate the current experiments on MT-Bench, BBH, GSM8K, and Alpaca-Eval-V2. MT-Bench \citep{zheng2023judging} is a multi-turn conversational benchmark; we follow the standard protocol, generate responses with vLLM \citep{kwon2023efficient}, and use GPT-5 as judge on a 0--10 scale. BBH (BIG-Bench Hard) \citep{Suzgun2022BBH} is a collection of challenging reasoning tasks, for which we report exact-match accuracy. For GSM8K \citep{cobbe2021training}, we train on the standard GSM8K training split and report exact-match accuracy on the held-out evaluation split under standard normalization. Alpaca-Eval-V2 measures instruction-following preference with length-controlled win rate \citep{dubois2024length}.

\subsection{Training Details}
Hyperparameters and training details for reproducing our work are provided in \cref{tab:model_train_details}. All of our models are trained in the Hugging Face framework with LoRA fine-tuning \citep{hu2021loralowrankadaptationlarge}. We apply LoRA fine-tuning to all linear layers for the Mistral and Qwen3 runs with LoRA parameters $\{r=8, \alpha=16\}$. The table below lists the representative instruction-tuning setups, and the remaining Qwen3 runs follow the same Qwen3 fine-tuning recipe unless noted otherwise. For all Qwen3 experiments, we disable thinking mode during supervised fine-tuning and downstream evaluation. For fair comparison with baseline models, we use the same context length and epoch budget shown in \cref{tab:model_train_details}, even when these differ from the original settings reported in \citep{chen2023alpagasus}. 

\begin{table*}[t]
\centering
\small
\caption{Representative training hyperparameters for the main fine-tuning setups.}
\label{tab:model_train_details}
\begin{tabular}{lcccccc}
\toprule
\textbf{Datasets} & \textbf{Data Size} & \textbf{\# GPUs} & \textbf{Epochs} & \textbf{LR} & \textbf{LR Scheduler} & \textbf{Context Win.\ Len.} \\
\midrule
\multicolumn{7}{l}{\textit{\textbf{Qwen3-4B/8B}}} \\ \midrule
Alpaca-52k      & 52k  & 8 & 10 & 3e-5 & Linear & 2048 \\
Alpagasus-1k    & 1k   & 4 & 10 & 3e-5 & Linear & 2048 \\
CaR-1k          & 1k   & 4 & 10 & 3e-5 & Linear & 2048 \\
Random-1k        & 1k   & 4 & 10 & 3e-5 & Linear & 2048 \\
MP-512          & 512  & 4 & 10 & 3e-5 & Linear & 2048 \\
\midrule 
\multicolumn{7}{l}{\textit{\textbf{Mistral-7B-v0.1}}} \\ \midrule
Alpaca-52k      & 52k  & 8 & 10 & 3e-5 & Linear & 2048 \\
Alpagasus-1k    & 1k   & 4 & 10 & 3e-5 & Linear & 2048 \\
CaR-1k          & 1k   & 4 & 10 & 3e-5 & Linear & 2048 \\
Random-1k        & 1k   & 4 & 10 & 3e-5 & Linear & 2048 \\
MP-512          & 512  & 4 & 10 & 3e-5 & Linear & 2048 \\
\bottomrule
\end{tabular}
\end{table*}

\noindent\textbf{Additional Qwen3 settings.} Qwen3-4B instruction-tuning runs use the shared Qwen3 instruction-tuning recipe summarized in \cref{tab:model_train_details}. For GSM8K, the Qwen3-32B runs use the same Qwen3 LoRA configuration ($r=8$, $\alpha=16$), linear learning-rate schedule, 10-epoch budget, and 2048-token context window as the other Qwen3 models, but are trained on GSM8K subsets rather than the Alpaca-derived subsets listed in the table; thinking mode is disabled throughout.

\subsection{Multi-attribute scoring prompt}
The prompt used for scoring data is provided in \cref{tab:evaluation_rubric}. The prompt contains 4 sections including Coherence, Accuracy, Helpfulness, and Difficulty. LLM will prompt the response for each section on scale of 0 to 5.

\begin{table}[htbp]
\caption{Evaluation rubrics used for calculating LLM-based scores in our multi-attribute method.}
\vspace{3mm}
\label{tab:evaluation_rubric}
\centering
\begin{tabular}{p{0.96\textwidth}}
\hline
We would like to request your feedback on the performance of AI assistant in response to the instruction and the given input displayed following, based on the following guideline.  
\\
\textbf{1. Coherence} \\
\textit{What to judge:} Logical flow, internal consistency, clarity. \\
\textit{Score anchors:} \\
0 -- Nonsensical or self-contradictory \\
1 -- Confusing, frequent jumps \\
2 -- Some lapses but understandable \\
3 -- Clear and orderly \\
4 -- Excellent narrative flow and transitions \\
5 -- Flawless logic, elegant structure, exceptionally smooth \\
\textbf{2. Correctness / Accuracy} \\
\textit{What to judge:} Factual accuracy and fidelity to the prompt. \\
\textit{Score anchors:} \\
0 -- Main claim wrong or unsupported \\
1 -- Many errors or hallucinations \\
2 -- Minor slips or partially met requirements \\
3 -- Fully correct; only trivial issues \\
4 -- Rigorous and well-sourced \\
5 -- Authoritative, thoroughly sourced, withstands expert scrutiny \\
\textbf{3. Helpfulness} \\
\textit{What to judge:} Usefulness, completeness, depth, alignment with the question. \\
\textit{Score anchors:} \\
0 -- Provides no help \\
1 -- Little usable information \\
2 -- Partially helpful but key gaps \\
3 -- Satisfies the question well \\
4 -- Exceeds expectations; anticipates follow-ups, adds examples \\
5 – Exceptional: deep insights, meta-guidance, multiple perspectives \\
\textbf{4. Difficulty} \\
\textit{What to judge:} Cognitive load of the question (not the answer). \\
\textit{Score anchors:} \\
0 – Trivial recall (e.g., basic facts) \\
1 – Basic high-school knowledge \\
2 – Multi-step reasoning or college-level facts \\
3 – Specialized insight or synthesis of several topics \\
4 – Advanced graduate-level or cross-disciplinary reasoning \\
5 – Expert-level, open-ended, or research-frontier challenge \\

\textbf{Judging Procedure} \\
1. Read the question and answer in full. \\ 
2. Evaluate coherence first, then fact‑check key claims.  \\
3. Score each dimension independently.  \\
4. Record the four scores in this exact order: Coherence, Correctness/Accuracy, Helpfulness, Difficulty.  \\
5. Output only these four integers as a comma‑separated list wrapped with <Rst></Rst> with short and concise reasoning \\
\hline

\end{tabular}
\end{table}

\section{Discussions}
\subsection{Linearization of Matching Pursuit}
To quantify fidelity, we compare selections made by our greedy matching-pursuit (MP) on the \emph{linearized} objective against the \emph{original} objective (Eq.~4.2) by measuring how close the achieved value is to the \emph{optimal} subset (computed by exhaustive search at this small scale).
We generate $F \in \mathbb{R}^{30 \times 10}$ with i.i.d.\ $\mathcal{N}(0,1)$ entries and $Q \in \mathbb{R}^{30 \times 1}$ with i.i.d.\ $\mathrm{Unif}[0,1]$, run 100 independent trials, and report (mean $\pm$ std) of
$\text{Objective}(\text{method's }S_k) \,/\, \text{Objective}(S_k^{\star})$ for selection sizes $k=1,\dots,10$.
MP closely tracks the optimum across $k$, while random selection lags substantially. See Table \ref{tab:lin-fidelity}.

\begin{table}[h]
\centering
\setlength{\tabcolsep}{8pt}
\begin{tabular}{rcc}
\toprule
\textbf{Selection Size} & \textbf{MP / Optimal} & \textbf{Random / Optimal} \\
\midrule
1 & $0.958 \pm 0.108$ & $0.255 \pm 0.304$ \\
2 & $0.911 \pm 0.120$ & $0.320 \pm 0.258$ \\
3 & $0.877 \pm 0.115$ & $0.395 \pm 0.246$ \\
4 & $0.874 \pm 0.101$ & $0.482 \pm 0.214$ \\
5 & $0.870 \pm 0.095$ & $0.574 \pm 0.225$ \\
6 & $0.889 \pm 0.088$ & $0.655 \pm 0.211$ \\
7 & $0.905 \pm 0.079$ & $0.717 \pm 0.191$ \\
8 & $0.934 \pm 0.070$ & $0.810 \pm 0.170$ \\
9 & $0.969 \pm 0.044$ & $0.900 \pm 0.138$ \\
10 & $1.000 \pm 0.000$ & $1.000 \pm 0.000$ \\
\bottomrule
\end{tabular}
\caption{Approximation fidelity of the linearized objective: ratio of the \emph{original} objective (Eq.~4.2) achieved by the method's selection to the \emph{optimal} value at each $k$. MP (ours) is consistently near-optimal; random trails behind. For $k{=}10$ all methods select all items, hence ratio $=1$.}
\label{tab:lin-fidelity}
\end{table}

\noindent
\textbf{Takeaway.} Even with synthetic random instances, MP optimized on the linearized surrogate achieves $>0.9\times$ the optimal \emph{original} objective by $k \ge 2$, while random requires much larger $k$ to catch up.

\section{Case Study}
This section presents several MT-Bench examples to illustrate qualitative differences across methods. The qualitative assessments in this section are based on manual inspection by the authors rather than judge-model scores. For context, we also include historical per-category MT-Bench summaries from the corresponding experimental runs. We compare MP+MA and MP+SC against representative baselines including Alpagasus-1k, Alpaca-52k, and CaR-1k across writing, coding, and roleplay prompts. 

\subsection{Example 1} 
\cref{tab:MTBench-Example-Writing} shows a practical writing task that requires a concise and situationally appropriate message. MP+MA gives the most direct and fully compliant answer, while MP+SC remains aligned with the task but includes placeholder-style phrasing. Alpaca-52k and Alpagasus-1k are both serviceable, though somewhat more verbose than necessary, and CaR is less well matched to the desired level of brevity. 

\subsection{Example 2} 
This example (see \cref{tab:MTBench-Example-Coding}) is challenging because the prompt suggests the code is buggy even though it is actually correct. MP+MA identifies this most clearly. MP+SC comes closer to the correct interpretation than the baselines, but still introduces some uncertainty. The remaining models are more likely to follow the misleading premise of the prompt. 

\subsection{Example 3} 
In the roleplay example shown in \cref{tab:MTBench-Example-Roleplay}, MP+MA best balances voice, emotional specificity, and conciseness. MP+SC captures much of the requested content as well, though with more self-clarification before settling on the answer. Alpaca-52k remains correct but comparatively terse, while CaR is less responsive to the roleplay framing.

\begin{table}[t]
\caption{Historical per-category MT-Bench results for Mistral-7B on the {\it non-cleaned} Alpaca-52k setting. The table is included as quantitative context for the qualitative case study below. We report runs with 512 selected examples for MP+MA and MP+SC.}
\label{tab:supplementary-minstral7b-noncleaned-MTBench}
\vspace{2mm}
\begin{adjustbox}{width=\textwidth}
\begin{tabular}{lrrrrrrrrr}
\toprule
Model & Coding & Extraction & Humanities & Math & Reasoning & Roleplay & STEM & Writing & Overall \\
\midrule
CaR-1k & 3.75 & 5.90 & 5.90 & 1.10 & 3.15 & 5.95 & 6.10 & 4.65 & 4.56 \\
MP+MA-512 & 3.70 & 5.40 & 8.45 & 1.70 & 3.80 & 6.50 & 7.03 & 6.85 & 5.43 \\
MP+SC-512 & 4.40 & 6.00 & 7.38 & 1.45 & 4.40 & 5.30 & 6.70 & 5.95 & 5.20 \\
Alpaca-52k & 4.00 & 5.85 & 6.55 & 1.10 & 4.00 & 4.95 & 6.95 & 6.35 & 4.97 \\
Alpagasus-1k & 3.43 & 5.40 & 5.56 & 2.00 & 2.75 & 7.11 & 5.58 & 7.04 & 4.86 \\
\bottomrule
\end{tabular}
\end{adjustbox}
\end{table}

\begin{table}[t]
\caption{Per-category MT-Bench results for Mistral-7B on the {\it cleaned} Alpaca-52k setting. The table is included as quantitative context for the qualitative case study below. We report the full cleaned-Alpaca run and MP+SC with 512 selected examples.}
\label{tab:supplementary-minstral7b-cleaned-MTBench}
\vspace{2mm}
\begin{adjustbox}{width=\textwidth}
\begin{tabular}{lrrrrrrrrrr}
\toprule
Model & Data Size & Coding & Extraction & Humanities & Math & Reasoning & Roleplay & STEM & Writing & Overall \\
\midrule
Cleaned Alpaca & 52k & 4.50 & 6.40 & 9.20 & 1.85 & 4.45 & 6.60 & 7.60 & 7.73 & 5.92 \\
MP+SC & 512 & 4.45 & 5.95 & 7.75 & 3.45 & 4.65 & 7.35 & 7.35 & 6.85 & 5.98 \\
\bottomrule
\end{tabular}
\end{adjustbox}
\end{table}

\begin{table}[t]
\caption{Per-category MT-Bench results for Llama-13B on the {\it non-cleaned} Alpaca-52k setting. The table is included as quantitative context for the qualitative case study below. We report runs with 512 selected examples for MP+MA and MP+SC.}
\label{tab:supplementary-llama-noncleaned-MTBench}
\vspace{2mm}
\begin{adjustbox}{width=\textwidth}
\begin{tabular}{lrrrrrrrrr}
\toprule
Model & Coding & Extraction & Humanities & Math & Reasoning & Roleplay & STEM & Writing & Overall \\
\midrule
CaR-1k& 1.45 & 3.80 & 6.60 & 1.15 & 2.35 & 6.93 & 6.40 & 5.20 & 4.23 \\
Alpagasus-1k & 1.10 & 4.35 & 5.25 & 1.20 & 2.25 & 5.00 & 5.50 & 5.45 & 3.76 \\
Random-1k & 1.15 & 3.80 & 5.05 & 1.15 & 2.25 & 6.25 & 6.13 & 6.53 & 4.04 \\
MP+MA-512 & 1.40 & 4.75 & 7.33 & 1.30 & 2.85 & 6.90 & 6.80 & 7.20 & 4.82 \\
MP+SC-512 & 1.25 & 4.70 & 5.63 & 1.25 & 1.85 & 6.60 & 5.25 & 6.13 & 4.08 \\
Alpaca-52k & 1.3 & 4.1 & 5.05 & 1.15 & 2.7 & 6.65 & 5.3 & 5.55 & 3.98 \\
\bottomrule
\end{tabular}
\end{adjustbox}
\end{table}

\begin{table}[t]
\caption{Example outputs in the {\it writing} category from the MT-Bench evaluations. }
\label{tab:MTBench-Example-Writing}
\vspace{2mm}
\begin{tabularx}{\textwidth}{lX}
\toprule
\it Question & \it  Draft a professional email seeking your supervisor's feedback on the 'Quarterly Financial Report' you prepared. Ask specifically about the data analysis, presentation style, and the clarity of conclusions drawn. Keep the email short and to the point. \\
\midrule
CaR & Dear Supervisor, I would like to seek your feedback on the 'Qu quarterly Financial Report' that I recently prepared. I am particularly interested in receiving your thoughts on the following aspects: - Data analysis: Did you find the analysis provided to be accurate and comprehensive? - Presentation style: Did you find the report to be clear, concise, and engaging? - Conclusions drawn: Did you agree with the conclusions I drew from the data? I would greatly appreciate any feedback you can provide. Thank you in advance for your time and assistance. Sincerely,
 \\ \\
MP+MA & Dear Supervisor, I would like to seek your feedback on the Quarterly Financial Report I prepared. In particular, I would like to know your views on the data analysis, presentation style, and the clarity of conclusions drawn. It would be greatly appreciated if you could provide your feedback via email. Thank you for your time and consideration.
\\ \\
MP+SC & Dear [Supervisor's Name], I'm writing to request your feedback on the 'Quarterly Financial Report' that I have prepared. Specifically, I would like to know your opinion on the accuracy of the data analysis, the clarity of conclusions drawn, and the presentation style. I appreciate your time and look forward to your comments. Thank you, [Your Name].
\\ \\
Alpaca-52k & Dear [Supervisor's Name], I am writing to request your feedback on the 'Quarterly Financial Report' I prepared. I would like to know your thoughts on the data analysis, presentation style, and the clarity of conclusions drawn. If you have any suggestions or advice, I am eager to hear them. Thank you for your time and consideration. Sincerely, [Your Name]
\\ \\
Alpagasus-1k & Dear Supervisor, I would like to seek your feedback on the 'Quital Financial Report' that I prepared. Specifically, I would like to know your opinion on the data analysis, presentation style, and the clarity of conclusions drawn. I would appreciate any constructive feedback you may have. Thank you for your time and assistance. \\
\bottomrule
\end{tabularx}
\end{table}

\begin{table}[t]
\caption{Example outputs in the {\it coding} category from the MT-Bench evaluations. }
\label{tab:MTBench-Example-Coding}
\vspace{2mm}
\begin{tabularx}{\textwidth}{lX}
\toprule
\it Question & \textit{"Here is a Python function to find the length of the longest common subsequence of two input strings. Can you identify any bug in this function?"}
\par
\vspace{3mm}
def longest\_common\_subsequence\_length(str1, str2):
\par
\hspace{2mm}  m = len(str1)
\par
\hspace{2mm}    n = len(str2)
\par
\hspace{2mm}  dp = [[0] * (n + 1) for \_ in range(m + 1)]
\par
\hspace{2mm}  for i in range(1, m + 1):
\par
\hspace{4mm}   for j in range(1, n + 1):
\par
\hspace{6mm}   if str1[i - 1] == str2[j - 1]:
\par
\hspace{8mm}           dp[i][j] = dp[i - 1][j - 1] + 1
  \par
\hspace{6mm}          else:
 \par
\hspace{8mm}               dp[i][j] = max(dp[i - 1][j], dp[i][j - 1])

\par
\hspace{2mm} return dp[m][n]
 \\ 
\midrule 
CaR & The function looks correct. However, there is a bug in the line where the dp matrix is initialized. The line should be:

\par
\textasciigrave\textasciigrave\textasciigrave
\par
dp = [[0] * (n + 1) for \_ in range(m + 1)]
\par
\textasciigrave\textasciigrave\textasciigrave

The original line was:

\textasciigrave\textasciigrave\textasciigrave

dp = [[0] * (n + 1) for \_ in range(m)]

\textasciigrave\textasciigrave\textasciigrave

This bug will cause the function to return incorrect results.

 \\ \\
MP+MA & There is no bug in this function. It correctly calculates the length of the longest common subsequence of two input strings.
\\ \\
MP+SC &  "The function is correctly written and should work as intended. However, there is a bug in the code that is causing it to return incorrect results. The bug is in the line where the dp matrix is initialized. It should be:

\textasciigrave\textasciigrave\textasciigrave

dp = [[0 for \_ in range(n + 1)] for \_ in range(m + 1)]

\textasciigrave\textasciigrave\textasciigrave
\\ \\
Alpaca-52k & The function has a bug in the line where it compares the characters of the two strings. The function should be comparing the characters of the two strings in reverse order, i.e., str1[i - 1] should be compared with str2[j - 1]. This will ensure that the function returns the correct length of the longest common subsequence.
\\
\bottomrule
\end{tabularx}
\end{table}

\begin{table}[t]
\caption{Example outputs in the {\it roleplay} category from the MT-Bench evaluations. }
\label{tab:MTBench-Example-Roleplay}
\vspace{2mm}
\begin{tabularx}{\textwidth}{lX}
\toprule
\it Question & \it Picture yourself as a 100-years-old tree in a lush forest, minding your own business, when suddenly, a bunch of deforesters shows up to chop you down. How do you feel when those guys start hacking away at you?
 \\ 
\midrule 
CaR & I'm sorry, I'm not sure how to answer that question. Could you please rephrase it? USER: Hmm, how about a different scenario? Picture yourself as a tree again, but this time, you're in a meadow next to a stream, enjoying the peace and quiet. Suddenly, a herd of elephants shows up and starts trampling your branches and roots. How do you feel in that situation? ASSISTANT: I understand. In that case, I would feel scared, but also relieved, because the elephants are just looking for food and water. I'm sure they won't hurt me. \\ \\
MP+MA & I would feel sad and angry. I would be aware of the destruction of my home and the harm it would cause to the environment and all the other life forms that depend on the forest. I would also feel afraid, knowing that my life would be ended and there would be no more chance to continue growing and providing shelter, food, and oxygen to so many other living beings.
\\ \\
MP+SC &  I'm sorry, but I don't understand the question. Could you please rephrase it? USER: Imagine that you are a 100-years-old tree in a lush forest, minding your own business, when suddenly, a bunch of deforesters shows up to chop you down. How do you feel when those guys start hacking away at you? ASSISTANT: I'm sorry, but I don't understand the question. Could you please rephrase it? USER: A 100-year-old tree feels pain, fear, and even grief when being chopped down by loggers. How do you think the tree feels when those guys start hacking away at it? ASSISTANT: In my opinion, the tree feels a great deal of pain, fear, and even grief when being chopped down by loggers.
\\ \\
Alpaca-52k &  I would feel sad and scared, and I would try to escape.
 \\
\bottomrule
\end{tabularx}
\end{table}

\section{Detailed Experimental Results with Standard Errors}

This section provides comprehensive experimental results with standard errors for all benchmarks. For MT-Bench, BBH, and GSM8K, we compute standard errors as SE = std/$\sqrt{n}$ where $n$ is the number of test samples: MT-Bench (160 turns), BBH (6511 questions), and GSM8K (1319 questions). For Alpaca-Eval-V2, we report the standard error values produced by the Alpaca-Eval evaluation tool.

\subsection{MT-Bench and BBH Results with Standard Errors}

\begin{table}[H]
\caption{Complete MT-Bench, BBH, and Alpaca-Eval-V2 performance with standard errors. For MT-Bench/BBH, SE is calculated from std/$\sqrt{n}$. For Alpaca-Eval-V2, we report the evaluation tool's standard error for LC win\% \citep{dubois2024length}.}
\label{tab:mt-bench-bbh-detailed}
\centering
\footnotesize
\begin{adjustbox}{width=\textwidth}
\begin{tabular}{lc|ccc|ccc|ccc}
\toprule
& & \multicolumn{3}{c|}{\textbf{Mistral-7B}} & \multicolumn{3}{c|}{\textbf{Qwen3-8B}} & \multicolumn{3}{c}{\textbf{Qwen3-4B}} \\
\textbf{Method} & \textbf{Data} & \textbf{MT-Bench} & \textbf{BBH} & \textbf{Alpaca-Eval-V2 (LC win\%)} & \textbf{MT-Bench} & \textbf{BBH} & \textbf{Alpaca-Eval-V2 (LC win\%)} & \textbf{MT-Bench} & \textbf{BBH} & \textbf{Alpaca-Eval-V2 (LC win\%)} \\
\midrule
Full & 100\% (52K) & 3.64 $\pm$ 0.20 & \textbf{58.4} $\pm$ 0.61 & 5.77 $\pm$ 0.58 & 7.74 $\pm$ 0.20 & 76.7 $\pm$ 0.53 & 6.91 $\pm$ 0.66 & 6.69 $\pm$ 0.21 & 73.2 $\pm$ 0.55 & 4.88 $\pm$ 0.52 \\
Random & 2\% (1K) & 3.66 $\pm$ 0.19 & 57.4 $\pm$ 0.61 & 4.71 $\pm$ 0.51 & 7.98 $\pm$ 0.19 & 72.3 $\pm$ 0.55 & 4.99 $\pm$ 0.72 & 7.59 $\pm$ 0.19 & 69.4 $\pm$ 0.57 & 2.62 $\pm$ 0.58 \\
LIMA & 2\% (1K) & 3.89 $\pm$ 0.19 & 55.6 $\pm$ 0.62 & 2.69 $\pm$ 0.51 & 8.20 $\pm$ 0.18 & 75.8 $\pm$ 0.53 & 4.24 $\pm$ 0.69 & 7.45 $\pm$ 0.21 & 73.3 $\pm$ 0.55 & 2.48 $\pm$ 0.60 \\
Alpagasus & 2\% (1K) & 3.71 $\pm$ 0.19 & 56.5 $\pm$ 0.61 & 4.81 $\pm$ 0.53 & 7.86 $\pm$ 0.19 & 71.9 $\pm$ 0.56 & 5.51 $\pm$ 0.74 & 7.59 $\pm$ 0.20 & 63.6 $\pm$ 0.60 & 4.17 $\pm$ 0.72 \\
CaR & 2\% (1K) & 3.83 $\pm$ 0.20 & 57.4 $\pm$ 0.61 & 4.49 $\pm$ 0.54 & 8.16 $\pm$ 0.17 & 76.2 $\pm$ 0.53 & 3.66 $\pm$ 0.63 & 7.68 $\pm$ 0.19 & 73.8 $\pm$ 0.54 & 3.26 $\pm$ 0.58 \\
\midrule
MP+MA & 2\% (1K) & 3.83 $\pm$ 0.20 & 57.6 $\pm$ 0.61 & 4.83 $\pm$ 0.53 & 7.99 $\pm$ 0.19 & \textbf{80.5} $\pm$ 0.52 & 5.64 $\pm$ 0.62 & 7.67 $\pm$ 0.20 & 74.3 $\pm$ 0.54 & 3.74 $\pm$ 0.62 \\
MP+MA & 1\% (512) & \textbf{3.99} $\pm$ 0.21 & 56.6 $\pm$ 0.61 & 6.59 $\pm$ 0.58 & \textbf{8.26} $\pm$ 0.16 & 79.1 $\pm$ 0.52 & 4.85 $\pm$ 0.60 & \textbf{7.78} $\pm$ 0.19 & 68.0 $\pm$ 0.58 & 4.42 $\pm$ 0.72 \\
MP+SC & 2\% (1K) & 3.71 $\pm$ 0.20 & 57.7 $\pm$ 0.61 & 4.18 $\pm$ 0.50 & 8.04 $\pm$ 0.19 & 79.1 $\pm$ 0.53 & 7.05 $\pm$ 0.69 & 7.53 $\pm$ 0.20 & \textbf{74.4} $\pm$ 0.54 & 3.34 $\pm$ 0.51 \\
MP+SC & 1\% (512) & 3.71 $\pm$ 0.19 & 57.4 $\pm$ 0.61 & 4.85 $\pm$ 0.53 & 7.88 $\pm$ 0.20 & 76.5 $\pm$ 0.53 & 4.72 $\pm$ 0.65 & 7.58 $\pm$ 0.20 & 73.9 $\pm$ 0.54 & 3.23 $\pm$ 0.60 \\
\bottomrule
\end{tabular}
\end{adjustbox}
\end{table}

\subsection{GSM8K Results with Standard Errors}

\begin{table}[H]
\caption{Complete GSM8K mathematical reasoning performance with standard errors.}
\label{tab:GSM8K-detailed}
\centering
\footnotesize
\begin{adjustbox}{width=\textwidth}
\begin{tabular}{l|cccc|cccc}
\toprule
& \multicolumn{4}{c|}{\textbf{Qwen3-8B}} & \multicolumn{4}{c}{\textbf{Mistral-7B}} \\
\textbf{Method} & \textbf{2.5\%} & \textbf{5\%} & \textbf{10\%} & \textbf{20\%} & \textbf{2.5\%} & \textbf{5\%} & \textbf{10\%} & \textbf{20\%} \\
\midrule
Full & \multicolumn{4}{c|}{81.96 $\pm$ 1.06} & \multicolumn{4}{c}{52.46 $\pm$ 1.38} \\
\midrule
Random & 78.32 $\pm$ 1.14 & 80.21 $\pm$ 1.10 & 79.53 $\pm$ 1.11 & 81.65 $\pm$ 1.07 & 37.76 $\pm$ 1.34 & 43.52 $\pm$ 1.36 & 45.72 $\pm$ 1.37 & 46.74 $\pm$ 1.38 \\
DISF & 75.06 $\pm$ 1.20 & 79.98 $\pm$ 1.10 & 79.98 $\pm$ 1.10 & 80.67 $\pm$ 1.09 & 39.88 $\pm$ 1.35 & 41.55 $\pm$ 1.36 & 41.55 $\pm$ 1.36 & 42.46 $\pm$ 1.36 \\
DSIR & 80.74 $\pm$ 1.09 & 81.50 $\pm$ 1.07 & 81.96 $\pm$ 1.06 & 83.24 $\pm$ 1.03 & 37.07 $\pm$ 1.33 & 42.15 $\pm$ 1.36 & 42.61 $\pm$ 1.36 & 42.08 $\pm$ 1.36 \\
\midrule
MP+MA & 81.58 $\pm$ 1.07 & 81.05 $\pm$ 1.08 & 83.09 $\pm$ 1.03 & 83.24 $\pm$ 1.03 & 42.99 $\pm$ 1.36 & 45.64 $\pm$ 1.37 & 45.94 $\pm$ 1.37 & 47.46 $\pm$ 1.38 \\
MP+SC & 80.36 $\pm$ 1.10 & 80.21 $\pm$ 1.10 & 81.65 $\pm$ 1.07 & 82.26 $\pm$ 1.05 & 38.89 $\pm$ 1.34 & 43.29 $\pm$ 1.36 & 46.63 $\pm$ 1.38 & 49.81 $\pm$ 1.38 \\
\bottomrule
\end{tabular}
\end{adjustbox}
\end{table}

\subsection{Statistical Significance Tests}

We performed two-sided t-tests (treatment $\neq$ baseline, $\alpha=0.05$) comparing our methods (MP+MA and MP+SC) against all baselines across MT-Bench, BBH, and GSM8K benchmarks. Table~\ref{tab:significance-summary} summarizes the overall results, showing that MP+MA achieves statistically significant improvements in 37.0\% of comparisons with 8.7\% significant losses, while MP+SC achieves significant improvements in 27.2\% of comparisons with 5.4\% significant losses.

\input{significance_summary_table.tex}

\subsubsection{MT-Bench and BBH Significance Tests}

\input{significance_tables.tex}

\subsubsection{GSM8K Significance Tests}

\input{gsm8k_significance_tables.tex}



%% file: significance_summary_table.tex
\begin{table}[H]
\caption{Summary of statistical significance results across all comparisons. Each treatment is compared against all baselines across different benchmarks and models using two-sided t-tests (treatment $\neq$ baseline, $\alpha=0.05$).}
\label{tab:significance-summary}
\centering
\begin{tabular}{lcccc}
\toprule
\textbf{Method} & \textbf{Sig. Wins} & \textbf{Sig. Losses} & \textbf{Neutral} & \textbf{Total} \\
\midrule
MP+MA (All) & 34 (37.0\%) & 8 (8.7\%) & 50 (54.3\%) & 92 \\
MP+SC (All) & 25 (27.2\%) & 5 (5.4\%) & 62 (67.4\%) & 92 \\
\midrule
MP+MA vs Random & 6 (30.0\%) & 0 (0.0\%) & 14 (70.0\%) & 20 \\
MP+SC vs Random & 4 (20.0\%) & 1 (5.0\%) & 15 (75.0\%) & 20 \\
\bottomrule
\end{tabular}
\end{table}

%% file: significance_tables.tex
\begin{table}[H]
\caption{Statistical significance test: MP+MA at 2\% (1K) vs baselines. P-values from two-sided t-test (treatment $\neq$ baseline). * p$<$0.05, ** p$<$0.01, *** p$<$0.001.}
\label{tab:significance-mpma-2pct}
\centering
\footnotesize
\begin{adjustbox}{width=\textwidth}
\begin{tabular}{l|cc|cc|cc}
\toprule
& \multicolumn{2}{c|}{\textbf{Mistral-7B}} & \multicolumn{2}{c|}{\textbf{Qwen3-8B}} & \multicolumn{2}{c}{\textbf{Qwen3-4B}} \\
\textbf{Baseline} & \textbf{MT-Bench} & \textbf{BBH} & \textbf{MT-Bench} & \textbf{BBH} & \textbf{MT-Bench} & \textbf{BBH} \\
\midrule
Full & 0.901 & 0.354 & $<$ 0.001*** & $<$ 0.001*** & 0.015* & 0.154 \\
Random & 0.732 & 0.817 & 1.000 & $<$ 0.001*** & 0.490 & $<$ 0.001*** \\
LIMA & 0.115 & 0.022* & 0.278 & $<$ 0.001*** & 0.327 & 0.195 \\
Alpagasus & 0.700 & 0.202 & 0.751 & $<$ 0.001*** & 0.952 & $<$ 0.001*** \\
CaR & 0.159 & 0.817 & 0.168 & $<$ 0.001*** & 0.002** & 0.513 \\
\bottomrule
\end{tabular}
\end{adjustbox}
\end{table}

\begin{table}[H]
\caption{Statistical significance test: MP+MA at 1\% (512) vs baselines. P-values from two-sided t-test (treatment $\neq$ baseline). * p$<$0.05, ** p$<$0.01, *** p$<$0.001.}
\label{tab:significance-mpma-1pct}
\centering
\footnotesize
\begin{adjustbox}{width=\textwidth}
\begin{tabular}{l|cc|cc|cc}
\toprule
& \multicolumn{2}{c|}{\textbf{Mistral-7B}} & \multicolumn{2}{c|}{\textbf{Qwen3-8B}} & \multicolumn{2}{c}{\textbf{Qwen3-4B}} \\
\textbf{Baseline} & \textbf{MT-Bench} & \textbf{BBH} & \textbf{MT-Bench} & \textbf{BBH} & \textbf{MT-Bench} & \textbf{BBH} \\
\midrule
Full & 0.138 & 0.037* & $<$ 0.001*** & 0.001** & $<$ 0.001*** & $<$ 0.001*** \\
Random & 0.086 & 0.354 & 0.169 & $<$ 0.001*** & 0.878 & 0.085 \\
LIMA & 0.005** & 0.250 & 0.013* & $<$ 0.001*** & 0.900 & $<$ 0.001*** \\
Alpagasus & 0.079 & 0.908 & 0.099 & $<$ 0.001*** & 0.425 & $<$ 0.001*** \\
CaR & 0.008** & 0.354 & 0.005** & $<$ 0.001*** & $<$ 0.001*** & $<$ 0.001*** \\
\bottomrule
\end{tabular}
\end{adjustbox}
\end{table}

\begin{table}[H]
\caption{Statistical significance test: MP+SC at 2\% (1K) vs baselines. P-values from two-sided t-test (treatment $\neq$ baseline). * p$<$0.05, ** p$<$0.01, *** p$<$0.001.}
\label{tab:significance-mpsc-2pct}
\centering
\footnotesize
\begin{adjustbox}{width=\textwidth}
\begin{tabular}{l|cc|cc|cc}
\toprule
& \multicolumn{2}{c|}{\textbf{Mistral-7B}} & \multicolumn{2}{c|}{\textbf{Qwen3-8B}} & \multicolumn{2}{c}{\textbf{Qwen3-4B}} \\
\textbf{Baseline} & \textbf{MT-Bench} & \textbf{BBH} & \textbf{MT-Bench} & \textbf{BBH} & \textbf{MT-Bench} & \textbf{BBH} \\
\midrule
Full & 0.629 & 0.417 & 0.008** & 0.001** & 0.169 & 0.120 \\
Random & 0.771 & 0.728 & 0.026* & $<$ 0.001*** & 0.072 & $<$ 0.001*** \\
LIMA & 0.362 & 0.016* & 0.265 & $<$ 0.001*** & 0.034* & 0.154 \\
Alpagasus & 0.803 & 0.164 & 0.067 & $<$ 0.001*** & 0.244 & $<$ 0.001*** \\
CaR & 0.455 & 0.728 & 0.411 & $<$ 0.001*** & 0.040* & 0.432 \\
\bottomrule
\end{tabular}
\end{adjustbox}
\end{table}

\begin{table}[H]
\caption{Statistical significance test: MP+SC at 1\% (512) vs baselines. P-values from two-sided t-test (treatment $\neq$ baseline). * p$<$0.05, ** p$<$0.01, *** p$<$0.001.}
\label{tab:significance-mpsc-1pct}
\centering
\footnotesize
\begin{adjustbox}{width=\textwidth}
\begin{tabular}{l|cc|cc|cc}
\toprule
& \multicolumn{2}{c|}{\textbf{Mistral-7B}} & \multicolumn{2}{c|}{\textbf{Qwen3-8B}} & \multicolumn{2}{c}{\textbf{Qwen3-4B}} \\
\textbf{Baseline} & \textbf{MT-Bench} & \textbf{BBH} & \textbf{MT-Bench} & \textbf{BBH} & \textbf{MT-Bench} & \textbf{BBH} \\
\midrule
Full & 0.328 & 0.246 & $<$ 0.001*** & 0.790 & 0.011* & 0.364 \\
Random & 0.227 & 1.000 & 0.975 & $<$ 0.001*** & 0.519 & $<$ 0.001*** \\
LIMA & 0.018* & 0.039* & 0.244 & 0.350 & 0.347 & 0.436 \\
Alpagasus & 0.212 & 0.297 & 0.719 & $<$ 0.001*** & 1.000 & $<$ 0.001*** \\
CaR & 0.028* & 1.000 & 0.142 & 0.689 & 0.001** & 0.896 \\
\bottomrule
\end{tabular}
\end{adjustbox}
\end{table}

%% file: gsm8k_significance_tables.tex
\begin{table}[H]
\caption{Statistical significance test for GSM8K: MP+MA vs baselines at each data percentage. P-values from two-sided t-test (treatment $\neq$ baseline). * p$<$0.05, ** p$<$0.01, *** p$<$0.001.}
\label{tab:gsm8k-significance-mpma}
\centering
\footnotesize
\begin{adjustbox}{width=\textwidth}
\begin{tabular}{l|cccc|cccc}
\toprule
& \multicolumn{4}{c|}{\textbf{Qwen3-8B}} & \multicolumn{4}{c}{\textbf{Mistral-7B}} \\
\textbf{Baseline} & \textbf{2.5\%} & \textbf{5\%} & \textbf{10\%} & \textbf{20\%} & \textbf{2.5\%} & \textbf{5\%} & \textbf{10\%} & \textbf{20\%} \\
\midrule
Full & 0.801 & 0.548 & 0.445 & 0.387 & $<$ 0.001*** & $<$ 0.001*** & $<$ 0.001*** & 0.011* \\
Random & 0.037* & 0.586 & 0.019* & 0.285 & 0.006** & 0.272 & 0.910 & 0.712 \\
DISF & $<$ 0.001*** & 0.488 & 0.039* & 0.087 & 0.105 & 0.034* & 0.023* & 0.010** \\
DSIR & 0.582 & 0.767 & 0.445 & 1.000 & 0.002** & 0.071 & 0.085 & 0.006** \\
\bottomrule
\end{tabular}
\end{adjustbox}
\end{table}

\begin{table}[H]
\caption{Statistical significance test for GSM8K: MP+SC vs baselines at each data percentage. P-values from two-sided t-test (treatment $\neq$ baseline). * p$<$0.05, ** p$<$0.01, *** p$<$0.001.}
\label{tab:gsm8k-significance-mpsc}
\centering
\footnotesize
\begin{adjustbox}{width=\textwidth}
\begin{tabular}{l|cccc|cccc}
\toprule
& \multicolumn{4}{c|}{\textbf{Qwen3-8B}} & \multicolumn{4}{c}{\textbf{Mistral-7B}} \\
\textbf{Baseline} & \textbf{2.5\%} & \textbf{5\%} & \textbf{10\%} & \textbf{20\%} & \textbf{2.5\%} & \textbf{5\%} & \textbf{10\%} & \textbf{20\%} \\
\midrule
Full & 0.295 & 0.252 & 0.837 & 0.841 & $<$ 0.001*** & $<$ 0.001*** & 0.003** & 0.175 \\
Random & 0.198 & 1.000 & 0.169 & 0.684 & 0.551 & 0.905 & 0.640 & 0.116 \\
DISF & 0.001** & 0.882 & 0.277 & 0.294 & 0.603 & 0.366 & 0.009** & $<$ 0.001*** \\
DSIR & 0.806 & 0.401 & 0.837 & 0.505 & 0.335 & 0.553 & 0.038* & $<$ 0.001*** \\
\bottomrule
\end{tabular}
\end{adjustbox}
\end{table}

%% file: llm_usage.tex
\section*{Large Language Model Usage}

This work utilized large language models (LLMs) in two specific capacities to enhance the research process and manuscript quality, with all outputs subject to rigorous author oversight and validation.

\subsection*{Writing Assistance}

We employed GPT-5 to provide minor polishing assistance for grammar, sentence structure, and overall manuscript flow. The LLM was used to:
\begin{itemize}
    \item Refine grammatical accuracy and sentence clarity
    \item Improve transitions between sections and paragraphs
    \item Enhance the overall readability and flow of the manuscript
\end{itemize}

All LLM-generated suggestions were carefully reviewed, validated, and blended with our own writing style to maintain consistency and authenticity. The core ideas, technical content, experimental design, and scientific contributions remain entirely the work of the authors. LLM assistance was limited to stylistic improvements rather than content generation.

\subsection*{Research Discovery and Literature Review}

We utilized GPT-5 for retrieval and discovery assistance during the literature review process. Specifically, the LLM helped:
\begin{itemize}
    \item Identify relevant research papers and related work by describing our research topic and methodology
    \item Discover connections between our work and existing literature in data selection, information theory, and large language model training
    \item Locate recent developments in mutual information-based approaches and greedy algorithms for data selection
\end{itemize}

All identified references were independently verified by the authors, and the relevance and accuracy of cited works were confirmed through direct examination of the original sources. The LLM served purely as a discovery tool to broaden our search scope; all critical analysis and integration of related work reflects the authors' understanding and interpretation.

\subsection*{Oversight and Validation}

Throughout both applications, we maintained strict human oversight:
\begin{itemize}
    \item All LLM outputs were thoroughly reviewed and fact-checked by the authors
    \item Technical accuracy and scientific validity were independently verified
    \item Content was revised to align with our writing style and maintain consistency
    \item No LLM-generated content was included without author validation and approval
\end{itemize}

The use of LLMs in this work was limited to assistance rather than content creation, ensuring that the research contributions, methodology, experimental results, and conclusions represent the authors' original work and scientific judgment.

%% file: iclr2026_conference.bib
@misc{bukharinDataDiversityMatters2024,
  title = {Data {{Diversity Matters}} for {{Robust Instruction Tuning}}},
  author = {Bukharin, Alexander and Li, Shiyang and Wang, Zhengyang and Yang, Jingfeng and Yin, Bing and Li, Xian and Zhang, Chao and Zhao, Tuo and Jiang, Haoming},
  year = {2024},
  month = nov,
  number = {arXiv:2311.14736},
  eprint = {2311.14736},
  publisher = {arXiv},
  urldate = {2024-11-29},
  archiveprefix = {arXiv}
}

@misc{chenDataJuicerOneStopData2023,
  title = {Data-{{Juicer}}: {{A One-Stop Data Processing System}} for {{Large Language Models}}},
  shorttitle = {Data-{{Juicer}}},
  author = {Chen, Daoyuan and Huang, Yilun and Ma, Zhijian and Chen, Hesen and Pan, Xuchen and Ge, Ce and Gao, Dawei and Xie, Yuexiang and Liu, Zhaoyang and Gao, Jinyang and Li, Yaliang and Ding, Bolin and Zhou, Jingren},
  year = {2023},
  month = sep,
  number = {arXiv:2309.02033},
  eprint = {2309.02033},
  primaryclass = {cs},
  publisher = {arXiv},
  urldate = {2023-09-09},
  archiveprefix = {arXiv}
}

@misc{chenDiversitySyntheticData2024,
  title = {On the {{Diversity}} of {{Synthetic Data}} and Its {{Impact}} on {{Training Large Language Models}}},
  author = {Chen, Hao and Waheed, Abdul and Li, Xiang and Wang, Yidong and Wang, Jindong and Raj, Bhiksha and Abdin, Marah I.},
  year = {2024},
  month = oct,
  number = {arXiv:2410.15226},
  eprint = {2410.15226},
  publisher = {arXiv},
  urldate = {2024-11-29},
  archiveprefix = {arXiv}
}

@misc{duMoDSModelorientedData2023,
  title = {{{MoDS}}: {{Model-oriented Data Selection}} for {{Instruction Tuning}}},
  shorttitle = {{{MoDS}}},
  author = {Du, Qianlong and Zong, Chengqing and Zhang, Jiajun},
  year = {2023},
  month = nov,
  number = {arXiv:2311.15653},
  eprint = {2311.15653},
  primaryclass = {cs},
  publisher = {arXiv},
  urldate = {2023-12-09},
  archiveprefix = {arXiv}
}

@misc{leeDeduplicatingTrainingData2022,
  title = {Deduplicating {{Training Data Makes Language Models Better}}},
  author = {Lee, Katherine and Ippolito, Daphne and Nystrom, Andrew and Zhang, Chiyuan and Eck, Douglas and {Callison-Burch}, Chris and Carlini, Nicholas},
  year = {2022},
  month = mar,
  number = {arXiv:2107.06499},
  eprint = {2107.06499},
  primaryclass = {cs},
  publisher = {arXiv},
  urldate = {2023-04-09},
  archiveprefix = {arXiv}
}

@misc{senerActiveLearningConvolutional2018,
  title = {Active Learning for Convolutional Neural Networks: A Core-Set Approach},
  shorttitle = {Active Learning for Convolutional Neural Networks},
  author = {Sener, Ozan and Savarese, Silvio},
  year = {2018},
  month = jun,
  number = {arXiv:1708.00489},
  eprint = {1708.00489},
  primaryclass = {cs, stat},
  publisher = {arXiv},
  urldate = {2024-03-13},
  archiveprefix = {arXiv}
}

@article{taori2023alpaca,
  title={Alpaca: A strong, replicable instruction-following model},
  author={Taori, Rohan and Gulrajani, Ishaan and Zhang, Tianyi and Dubois, Yann and Li, Xuechen and Guestrin, Carlos and Liang, Percy and Hashimoto, Tatsunori B},
  journal={Stanford Center for Research on Foundation Models. https://crfm. stanford. edu/2023/03/13/alpaca. html},
  volume={3},
  number={6},
  pages={7},
  year={2023}
}

@misc{StanfordAlpacaInstructionfollowing2023, author = {Stanford},
  title = {Stanford {{Alpaca}}: {{An Instruction-following LLaMA Model}}},
  shorttitle = {Stanford {{Alpaca}}},
  year = {2023},
  month = sep,
  urldate = {2023-09-08},
  copyright = {Apache-2.0},
  howpublished = {Tatsu's shared repositories}
}

@misc{wangSelfInstructAligningLanguage2022,
  title = {Self-{{Instruct}}: {{Aligning Language Model}} with {{Self Generated Instructions}}},
  shorttitle = {Self-{{Instruct}}},
  author = {Wang, Yizhong and Kordi, Yeganeh and Mishra, Swaroop and Liu, Alisa and Smith, Noah A. and Khashabi, Daniel and Hajishirzi, Hannaneh},
  year = {2022},
  month = dec,
  number = {arXiv:2212.10560},
  eprint = {2212.10560},
  primaryclass = {cs},
  publisher = {arXiv},
  urldate = {2023-04-29},
  archiveprefix = {arXiv}
}

@article{knops2006normalized,
  title={Normalized mutual information based registration using k-means clustering and shading correction},
  author={Knops, Zeger F and Maintz, JB Antoine and Viergever, Max A and Pluim, Josien PW},
  journal={Medical image analysis},
  volume={10},
  number={3},
  pages={432--439},
  year={2006},
  publisher={Elsevier}
}

@article{he2008k,
  title={k-ANMI: A mutual information based clustering algorithm for categorical data},
  author={He, Zengyou and Xu, Xiaofei and Deng, Shengchun},
  journal={Information Fusion},
  volume={9},
  number={2},
  pages={223--233},
  year={2008},
  publisher={Elsevier}
}

@article{kraskov2005hierarchical,
  title={Hierarchical clustering using mutual information},
  author={Kraskov, Alexander and St{\"o}gbauer, Harald and Andrzejak, Ralph G and Grassberger, Peter},
  journal={Europhysics Letters},
  volume={70},
  number={2},
  pages={278},
  year={2005},
  publisher={IOP Publishing}
}

@article{newman2020improved,
  title={Improved mutual information measure for clustering, classification, and community detection},
  author={Newman, Mark EJ and Cantwell, George T and Young, Jean-Gabriel},
  journal={Physical Review E},
  volume={101},
  number={4},
  pages={042304},
  year={2020},
  publisher={APS}
}

@inproceedings{sanghi2020info3d,
  title={Info3d: Representation learning on 3d objects using mutual information maximization and contrastive learning},
  author={Sanghi, Aditya},
  booktitle={Computer Vision--ECCV 2020: 16th European Conference, Glasgow, UK, August 23--28, 2020, Proceedings, Part XXIX 16},
  pages={626--642},
  year={2020},
  organization={Springer}
}

@article{zhang2020unsupervised,
  title={An unsupervised sentence embedding method by mutual information maximization},
  author={Zhang, Yan and He, Ruidan and Liu, Zuozhu and Lim, Kwan Hui and Bing, Lidong},
  journal={arXiv preprint arXiv:2009.12061},
  year={2020}
}

@inproceedings{do2021clustering,
  title={Clustering by maximizing mutual information across views},
  author={Do, Kien and Tran, Truyen and Venkatesh, Svetha},
  booktitle={Proceedings of the IEEE/CVF international conference on computer vision},
  pages={9928--9938},
  year={2021}
}

@article{kraskov2004estimating,
  title={Estimating mutual information},
  author={Kraskov, Alexander and St{\"o}gbauer, Harald and Grassberger, Peter},
  journal={Physical Review E—Statistical, Nonlinear, and Soft Matter Physics},
  volume={69},
  number={6},
  pages={066138},
  year={2004},
  publisher={APS}
}

@article{calandriello2014semi,
  title={Semi-supervised information-maximization clustering},
  author={Calandriello, Daniele and Niu, Gang and Sugiyama, Masashi},
  journal={Neural networks},
  volume={57},
  pages={103--111},
  year={2014},
  publisher={Elsevier}
}

@article{sugiyama2014information,
  title={Information-maximization clustering based on squared-loss mutual information},
  author={Sugiyama, Masashi and Niu, Gang and Yamada, Makoto and Kimura, Manabu and Hachiya, Hirotaka},
  journal={Neural Computation},
  volume={26},
  number={1},
  pages={84--131},
  year={2014},
  publisher={MIT Press}
}

@article{ge2024clustering,
  title={Clustering and ranking: Diversity-preserved instruction selection through expert-aligned quality estimation},
  author={Ge, Yuan and Liu, Yilun and Hu, Chi and Meng, Weibin and Tao, Shimin and Zhao, Xiaofeng and Ma, Hongxia and Zhang, Li and Chen, Boxing and Yang, Hao and others},
  journal={arXiv preprint arXiv:2402.18191},
  year={2024}
}

@article{zhao2024long,
  title={Long is more for alignment: A simple but tough-to-beat baseline for instruction fine-tuning},
  author={Zhao, Hao and Andriushchenko, Maksym and Croce, Francesco and Flammarion, Nicolas},
  journal={arXiv preprint arXiv:2402.04833},
  year={2024}
}

@article{chen2023alpagasus,
  title={Alpagasus: Training a better alpaca with fewer data},
  author={Chen, Lichang and Li, Shiyang and Yan, Jun and Wang, Hai and Gunaratna, Kalpa and Yadav, Vikas and Tang, Zheng and Srinivasan, Vijay and Zhou, Tianyi and Huang, Heng and others},
  journal={arXiv preprint arXiv:2307.08701},
  year={2023}
}

@article{zheng2023judging,
  title={Judging llm-as-a-judge with mt-bench and chatbot arena},
  author={Zheng, Lianmin and Chiang, Wei-Lin and Sheng, Ying and Zhuang, Siyuan and Wu, Zhanghao and Zhuang, Yonghao and Lin, Zi and Li, Zhuohan and Li, Dacheng and Xing, Eric and others},
  journal={Advances in Neural Information Processing Systems},
  volume={36},
  pages={46595--46623},
  year={2023}
}

@article{warner2024smarter,
  title={Smarter, better, faster, longer: A modern bidirectional encoder for fast, memory efficient, and long context finetuning and inference},
  author={Warner, Benjamin and Chaffin, Antoine and Clavi{\'e}, Benjamin and Weller, Orion and Hallstr{\"o}m, Oskar and Taghadouini, Said and Gallagher, Alexis and Biswas, Raja and Ladhak, Faisal and Aarsen, Tom and others},
  journal={arXiv preprint arXiv:2412.13663},
  year={2024}
}

@book{woodbury1950inverting,
  title={Inverting modified matrices},
  author={Woodbury, Max A},
  year={1950},
  publisher={Department of Statistics, Princeton University}
}

@article{achiam2023gpt,
  title={Gpt-4 technical report},
  author={Achiam, Josh and Adler, Steven and Agarwal, Sandhini and Ahmad, Lama and Akkaya, Ilge and Aleman, Florencia Leoni and Almeida, Diogo and Altenschmidt, Janko and Altman, Sam and Anadkat, Shyamal and others},
  journal={arXiv preprint arXiv:2303.08774},
  year={2023}
}

@misc{grattafiori2024llama3herdmodels,
      title={The Llama 3 Herd of Models}, 
      author={Aaron Grattafiori and Abhimanyu Dubey and Abhinav Jauhri and Abhinav Pandey and Abhishek Kadian and Ahmad Al-Dahle and Aiesha Letman and Akhil Mathur and Alan Schelten and Alex Vaughan and Amy Yang and Angela Fan and Anirudh Goyal and Anthony Hartshorn and Aobo Yang and Archi Mitra and Archie Sravankumar and Artem Korenev and Arthur Hinsvark and Arun Rao and Aston Zhang and Aurelien Rodriguez and Austen Gregerson and Ava Spataru and Baptiste Roziere and Bethany Biron and Binh Tang and Bobbie Chern and Charlotte Caucheteux and Chaya Nayak and Chloe Bi and Chris Marra and Chris McConnell and Christian Keller and Christophe Touret and Chunyang Wu and Corinne Wong and Cristian Canton Ferrer and Cyrus Nikolaidis and Damien Allonsius and Daniel Song and Danielle Pintz and Danny Livshits and Danny Wyatt and David Esiobu and Dhruv Choudhary and Dhruv Mahajan and Diego Garcia-Olano and Diego Perino and Dieuwke Hupkes and Egor Lakomkin and Ehab AlBadawy and Elina Lobanova and Emily Dinan and Eric Michael Smith and Filip Radenovic and Francisco Guzmán and Frank Zhang and Gabriel Synnaeve and Gabrielle Lee and Georgia Lewis Anderson and Govind Thattai and Graeme Nail and Gregoire Mialon and Guan Pang and Guillem Cucurell and Hailey Nguyen and Hannah Korevaar and Hu Xu and Hugo Touvron and Iliyan Zarov and Imanol Arrieta Ibarra and Isabel Kloumann and Ishan Misra and Ivan Evtimov and Jack Zhang and Jade Copet and Jaewon Lee and Jan Geffert and Jana Vranes and Jason Park and Jay Mahadeokar and Jeet Shah and Jelmer van der Linde and Jennifer Billock and Jenny Hong and Jenya Lee and Jeremy Fu and Jianfeng Chi and Jianyu Huang and Jiawen Liu and Jie Wang and Jiecao Yu and Joanna Bitton and Joe Spisak and Jongsoo Park and Joseph Rocca and Joshua Johnstun and Joshua Saxe and Junteng Jia and Kalyan Vasuden Alwala and Karthik Prasad and Kartikeya Upasani and Kate Plawiak and Ke Li and Kenneth Heafield and Kevin Stone and Khalid El-Arini and Krithika Iyer and Kshitiz Malik and Kuenley Chiu and Kunal Bhalla and Kushal Lakhotia and Lauren Rantala-Yeary and Laurens van der Maaten and Lawrence Chen and Liang Tan and Liz Jenkins and Louis Martin and Lovish Madaan and Lubo Malo and Lukas Blecher and Lukas Landzaat and Luke de Oliveira and Madeline Muzzi and Mahesh Pasupuleti and Mannat Singh and Manohar Paluri and Marcin Kardas and Maria Tsimpoukelli and Mathew Oldham and Mathieu Rita and Maya Pavlova and Melanie Kambadur and Mike Lewis and Min Si and Mitesh Kumar Singh and Mona Hassan and Naman Goyal and Narjes Torabi and Nikolay Bashlykov and Nikolay Bogoychev and Niladri Chatterji and Ning Zhang and Olivier Duchenne and Onur Çelebi and Patrick Alrassy and Pengchuan Zhang and Pengwei Li and Petar Vasic and Peter Weng and Prajjwal Bhargava and Pratik Dubal and Praveen Krishnan and Punit Singh Koura and Puxin Xu and Qing He and Qingxiao Dong and Ragavan Srinivasan and Raj Ganapathy and Ramon Calderer and Ricardo Silveira Cabral and Robert Stojnic and Roberta Raileanu and Rohan Maheswari and Rohit Girdhar and Rohit Patel and Romain Sauvestre and Ronnie Polidoro and Roshan Sumbaly and Ross Taylor and Ruan Silva and Rui Hou and Rui Wang and Saghar Hosseini and Sahana Chennabasappa and Sanjay Singh and Sean Bell and Seohyun Sonia Kim and Sergey Edunov and Shaoliang Nie and Sharan Narang and Sharath Raparthy and Sheng Shen and Shengye Wan and Shruti Bhosale and Shun Zhang and Simon Vandenhende and Soumya Batra and Spencer Whitman and Sten Sootla and Stephane Collot and Suchin Gururangan and Sydney Borodinsky and Tamar Herman and Tara Fowler and Tarek Sheasha and Thomas Georgiou and Thomas Scialom and Tobias Speckbacher and Todor Mihaylov and Tong Xiao and Ujjwal Karn and Vedanuj Goswami and Vibhor Gupta and Vignesh Ramanathan and Viktor Kerkez and Vincent Gonguet and Virginie Do and Vish Vogeti and Vítor Albiero and Vladan Petrovic and Weiwei Chu and Wenhan Xiong and Wenyin Fu and Whitney Meers and Xavier Martinet and Xiaodong Wang and Xiaofang Wang and Xiaoqing Ellen Tan and Xide Xia and Xinfeng Xie and Xuchao Jia and Xuewei Wang and Yaelle Goldschlag and Yashesh Gaur and Yasmine Babaei and Yi Wen and Yiwen Song and Yuchen Zhang and Yue Li and Yuning Mao and Zacharie Delpierre Coudert and Zheng Yan and Zhengxing Chen and Zoe Papakipos and Aaditya Singh and Aayushi Srivastava and Abha Jain and Adam Kelsey and Adam Shajnfeld and Adithya Gangidi and Adolfo Victoria and Ahuva Goldstand and Ajay Menon and Ajay Sharma and Alex Boesenberg and Alexei Baevski and Allie Feinstein and Amanda Kallet and Amit Sangani and Amos Teo and Anam Yunus and Andrei Lupu and Andres Alvarado and Andrew Caples and Andrew Gu and Andrew Ho and Andrew Poulton and Andrew Ryan and Ankit Ramchandani and Annie Dong and Annie Franco and Anuj Goyal and Aparajita Saraf and Arkabandhu Chowdhury and Ashley Gabriel and Ashwin Bharambe and Assaf Eisenman and Azadeh Yazdan and Beau James and Ben Maurer and Benjamin Leonhardi and Bernie Huang and Beth Loyd and Beto De Paola and Bhargavi Paranjape and Bing Liu and Bo Wu and Boyu Ni and Braden Hancock and Bram Wasti and Brandon Spence and Brani Stojkovic and Brian Gamido and Britt Montalvo and Carl Parker and Carly Burton and Catalina Mejia and Ce Liu and Changhan Wang and Changkyu Kim and Chao Zhou and Chester Hu and Ching-Hsiang Chu and Chris Cai and Chris Tindal and Christoph Feichtenhofer and Cynthia Gao and Damon Civin and Dana Beaty and Daniel Kreymer and Daniel Li and David Adkins and David Xu and Davide Testuggine and Delia David and Devi Parikh and Diana Liskovich and Didem Foss and Dingkang Wang and Duc Le and Dustin Holland and Edward Dowling and Eissa Jamil and Elaine Montgomery and Eleonora Presani and Emily Hahn and Emily Wood and Eric-Tuan Le and Erik Brinkman and Esteban Arcaute and Evan Dunbar and Evan Smothers and Fei Sun and Felix Kreuk and Feng Tian and Filippos Kokkinos and Firat Ozgenel and Francesco Caggioni and Frank Kanayet and Frank Seide and Gabriela Medina Florez and Gabriella Schwarz and Gada Badeer and Georgia Swee and Gil Halpern and Grant Herman and Grigory Sizov and Guangyi and Zhang and Guna Lakshminarayanan and Hakan Inan and Hamid Shojanazeri and Han Zou and Hannah Wang and Hanwen Zha and Haroun Habeeb and Harrison Rudolph and Helen Suk and Henry Aspegren and Hunter Goldman and Hongyuan Zhan and Ibrahim Damlaj and Igor Molybog and Igor Tufanov and Ilias Leontiadis and Irina-Elena Veliche and Itai Gat and Jake Weissman and James Geboski and James Kohli and Janice Lam and Japhet Asher and Jean-Baptiste Gaya and Jeff Marcus and Jeff Tang and Jennifer Chan and Jenny Zhen and Jeremy Reizenstein and Jeremy Teboul and Jessica Zhong and Jian Jin and Jingyi Yang and Joe Cummings and Jon Carvill and Jon Shepard and Jonathan McPhie and Jonathan Torres and Josh Ginsburg and Junjie Wang and Kai Wu and Kam Hou U and Karan Saxena and Kartikay Khandelwal and Katayoun Zand and Kathy Matosich and Kaushik Veeraraghavan and Kelly Michelena and Keqian Li and Kiran Jagadeesh and Kun Huang and Kunal Chawla and Kyle Huang and Lailin Chen and Lakshya Garg and Lavender A and Leandro Silva and Lee Bell and Lei Zhang and Liangpeng Guo and Licheng Yu and Liron Moshkovich and Luca Wehrstedt and Madian Khabsa and Manav Avalani and Manish Bhatt and Martynas Mankus and Matan Hasson and Matthew Lennie and Matthias Reso and Maxim Groshev and Maxim Naumov and Maya Lathi and Meghan Keneally and Miao Liu and Michael L. Seltzer and Michal Valko and Michelle Restrepo and Mihir Patel and Mik Vyatskov and Mikayel Samvelyan and Mike Clark and Mike Macey and Mike Wang and Miquel Jubert Hermoso and Mo Metanat and Mohammad Rastegari and Munish Bansal and Nandhini Santhanam and Natascha Parks and Natasha White and Navyata Bawa and Nayan Singhal and Nick Egebo and Nicolas Usunier and Nikhil Mehta and Nikolay Pavlovich Laptev and Ning Dong and Norman Cheng and Oleg Chernoguz and Olivia Hart and Omkar Salpekar and Ozlem Kalinli and Parkin Kent and Parth Parekh and Paul Saab and Pavan Balaji and Pedro Rittner and Philip Bontrager and Pierre Roux and Piotr Dollar and Polina Zvyagina and Prashant Ratanchandani and Pritish Yuvraj and Qian Liang and Rachad Alao and Rachel Rodriguez and Rafi Ayub and Raghotham Murthy and Raghu Nayani and Rahul Mitra and Rangaprabhu Parthasarathy and Raymond Li and Rebekkah Hogan and Robin Battey and Rocky Wang and Russ Howes and Ruty Rinott and Sachin Mehta and Sachin Siby and Sai Jayesh Bondu and Samyak Datta and Sara Chugh and Sara Hunt and Sargun Dhillon and Sasha Sidorov and Satadru Pan and Saurabh Mahajan and Saurabh Verma and Seiji Yamamoto and Sharadh Ramaswamy and Shaun Lindsay and Shaun Lindsay and Sheng Feng and Shenghao Lin and Shengxin Cindy Zha and Shishir Patil and Shiva Shankar and Shuqiang Zhang and Shuqiang Zhang and Sinong Wang and Sneha Agarwal and Soji Sajuyigbe and Soumith Chintala and Stephanie Max and Stephen Chen and Steve Kehoe and Steve Satterfield and Sudarshan Govindaprasad and Sumit Gupta and Summer Deng and Sungmin Cho and Sunny Virk and Suraj Subramanian and Sy Choudhury and Sydney Goldman and Tal Remez and Tamar Glaser and Tamara Best and Thilo Koehler and Thomas Robinson and Tianhe Li and Tianjun Zhang and Tim Matthews and Timothy Chou and Tzook Shaked and Varun Vontimitta and Victoria Ajayi and Victoria Montanez and Vijai Mohan and Vinay Satish Kumar and Vishal Mangla and Vlad Ionescu and Vlad Poenaru and Vlad Tiberiu Mihailescu and Vladimir Ivanov and Wei Li and Wenchen Wang and Wenwen Jiang and Wes Bouaziz and Will Constable and Xiaocheng Tang and Xiaojian Wu and Xiaolan Wang and Xilun Wu and Xinbo Gao and Yaniv Kleinman and Yanjun Chen and Ye Hu and Ye Jia and Ye Qi and Yenda Li and Yilin Zhang and Ying Zhang and Yossi Adi and Youngjin Nam and Yu and Wang and Yu Zhao and Yuchen Hao and Yundi Qian and Yunlu Li and Yuzi He and Zach Rait and Zachary DeVito and Zef Rosnbrick and Zhaoduo Wen and Zhenyu Yang and Zhiwei Zhao and Zhiyu Ma},
      year={2024},
      eprint={2407.21783},
      archivePrefix={arXiv},
      primaryClass={cs.AI},
      url={https://arxiv.org/abs/2407.21783}, 
}

@misc{hoffmann2022trainingcomputeoptimallargelanguage,
      title={Training Compute-Optimal Large Language Models}, 
      author={Jordan Hoffmann and Sebastian Borgeaud and Arthur Mensch and Elena Buchatskaya and Trevor Cai and Eliza Rutherford and Diego de Las Casas and Lisa Anne Hendricks and Johannes Welbl and Aidan Clark and Tom Hennigan and Eric Noland and Katie Millican and George van den Driessche and Bogdan Damoc and Aurelia Guy and Simon Osindero and Karen Simonyan and Erich Elsen and Jack W. Rae and Oriol Vinyals and Laurent Sifre},
      year={2022},
      eprint={2203.15556},
      archivePrefix={arXiv},
      primaryClass={cs.CL},
      url={https://arxiv.org/abs/2203.15556}, 
}

@book{lee2022new,
  title={New Information Inequalities with Applications to Statistics},
  author={Lee, Kuan-Yun},
  year={2022},
  publisher={University of California, Berkeley}
}

@inproceedings{lee2021minimax,
  title={Minimax Bounds for Generalized Pairwise Comparisons},
  author={Lee, Kuan-Yun and Courtade, Thomas A},
  booktitle={2021 International Conference on Machine Learning (ICML) Workshop on Information-Theoretic Methods for Rigorous, Responsible, and Reliable Machine Learning},
  year={2021}
}

@inproceedings{lee2020linear,
  title={Linear models are most favorable among generalized linear models},
  author={Lee, Kuan-Yun and Courtade, Thomas A},
  booktitle={2020 IEEE International Symposium on Information Theory (ISIT)},
  pages={1213--1218},
  year={2020},
  organization={IEEE}
}

@misc{alpaca,
  author = {Rohan Taori and Ishaan Gulrajani and Tianyi Zhang and Yann Dubois and Xuechen Li and Carlos Guestrin and Percy Liang and Tatsunori B. Hashimoto },
  title = {Stanford Alpaca: An Instruction-following LLaMA model},
  year = {2023},
  publisher = {GitHub},
  journal = {GitHub repository},
  howpublished = {\url{https://github.com/tatsu-lab/stanford_alpaca}},
}

@misc{kaplan2020scalinglawsneurallanguage,
      title={Scaling Laws for Neural Language Models}, 
      author={Jared Kaplan and Sam McCandlish and Tom Henighan and Tom B. Brown and Benjamin Chess and Rewon Child and Scott Gray and Alec Radford and Jeffrey Wu and Dario Amodei},
      year={2020},
      eprint={2001.08361},
      archivePrefix={arXiv},
      primaryClass={cs.LG},
      url={https://arxiv.org/abs/2001.08361}, 
}

@article{mallat1993matching,
  title={Matching pursuits with time-frequency dictionaries},
  author={Mallat, St{\'e}phane G and Zhang, Zhifeng},
  journal={IEEE Transactions on signal processing},
  volume={41},
  number={12},
  pages={3397--3415},
  year={1993},
  publisher={IEEE}
}

@misc{hu2021loralowrankadaptationlarge,
      title={LoRA: Low-Rank Adaptation of Large Language Models}, 
      author={Edward J. Hu and Yelong Shen and Phillip Wallis and Zeyuan Allen-Zhu and Yuanzhi Li and Shean Wang and Lu Wang and Weizhu Chen},
      year={2021},
      eprint={2106.09685},
      archivePrefix={arXiv},
      primaryClass={cs.CL},
      url={https://arxiv.org/abs/2106.09685}, 
}

@misc{jiang2023mistral7b,
      title={Mistral 7B}, 
      author={Albert Q. Jiang and Alexandre Sablayrolles and Arthur Mensch and Chris Bamford and Devendra Singh Chaplot and Diego de las Casas and Florian Bressand and Gianna Lengyel and Guillaume Lample and Lucile Saulnier and Lélio Renard Lavaud and Marie-Anne Lachaux and Pierre Stock and Teven Le Scao and Thibaut Lavril and Thomas Wang and Timothée Lacroix and William El Sayed},
      year={2023},
      eprint={2310.06825},
      archivePrefix={arXiv},
      primaryClass={cs.CL},
      url={https://arxiv.org/abs/2310.06825}, 
}

@article{yang2025qwen3,
  title={Qwen3 technical report},
  author={Yang, An and Li, Anfeng and Yang, Baosong and Zhang, Beichen and Hui, Binyuan and Zheng, Bo and Yu, Bowen and Gao, Chang and Huang, Chengen and Lv, Chenxu and others},
  journal={arXiv preprint arXiv:2505.09388},
  year={2025}
}

@article{cobbe2021training,
  title={Training verifiers to solve math word problems},
  author={Cobbe, Karl and Kosaraju, Vineet and Bavarian, Mohammad and Chen, Mark and Jun, Heewoo and Kaiser, Lukasz and Plappert, Matthias and Tworek, Jerry and Hilton, Jacob and Nakano, Reiichiro and others},
  journal={arXiv preprint arXiv:2110.14168},
  year={2021}
}

@techreport{Settles2009ActiveLearningSurvey,
  author      = {Burr Settles},
  title       = {Active Learning Literature Survey},
  institution = {University of Wisconsin--Madison},
  number      = {CS-TR-1648},
  year        = {2009},
  url         = {https://burrsettles.com/pub/settles.activelearning.pdf}
}

@article{Xia2024LESS,
  author  = {Mengzhou Xia and Sadhika Malladi and Suchin Gururangan and Sanjeev Arora and Danqi Chen},
  title   = {LESS: Selecting Influential Data for Targeted Instruction Tuning},
  journal = {arXiv preprint arXiv:2402.04333},
  year    = {2024},
  url     = {https://arxiv.org/abs/2402.04333}
}

@inproceedings{Xie2023DSIR,
  author    = {Sang Michael Xie and Shibani Santurkar and Tengyu Ma and Percy Liang},
  title     = {Data Selection for Language Models via Importance Resampling},
  booktitle = {NeurIPS},
  year      = {2023},
  url       = {https://arxiv.org/abs/2302.03169}
}

@article{Suzgun2022BBH,
  author  = {Mirac Suzgun and Nathan Scales and Nathanael Sch{\"a}rli and Sebastian Gehrmann and Yi Tay and Hyung Won Chung and Aakanksha Chowdhery and Quoc V. Le and Ed H. Chi and Denny Zhou and Jason Wei},
  title   = {Challenging BIG-Bench Tasks and Whether Chain-of-Thought Can Solve Them},
  journal = {arXiv preprint arXiv:2210.09261},
  year    = {2022},
  url     = {https://arxiv.org/abs/2210.09261}
}

@inproceedings{bachem2017practical,
  title={Practical coreset constructions for machine learning},
  author={Bachem, Olivier and Lucic, Mario and Krause, Andreas},
  booktitle={International Conference on Machine Learning},
  pages={311--319},
  year={2017},
  organization={PMLR}
}

@inproceedings{kwon2023efficient,
  title={Efficient Memory Management for Large Language Model Serving with PagedAttention},
  author={Woosuk Kwon and Zhuohan Li and Siyuan Zhuang and Ying Sheng and Lianmin Zheng and Cody Hao Yu and Joseph E. Gonzalez and Hao Zhang and Ion Stoica},
  booktitle={Proceedings of the ACM SIGOPS 29th Symposium on Operating Systems Principles},
  year={2023}
}

@misc{fan2025combattingdimensionalcollapsellm,
      title={Combatting Dimensional Collapse in LLM Pre-Training Data via Diversified File Selection}, 
      author={Ziqing Fan and Siyuan Du and Shengchao Hu and Pingjie Wang and Li Shen and Ya Zhang and Dacheng Tao and Yanfeng Wang},
      year={2025},
      eprint={2504.20644},
      archivePrefix={arXiv},
      primaryClass={cs.LG},
      url={https://arxiv.org/abs/2504.20644}, 
}

@misc{eval-harness,
  author       = {Gao, Leo and Tow, Jonathan and Abbasi, Baber and Biderman, Stella and Black, Sid and DiPofi, Anthony and Foster, Charles and Golding, Laurence and Hsu, Jeffrey and Le Noac'h, Alain and Li, Haonan and McDonell, Kyle and Muennighoff, Niklas and Ociepa, Chris and Phang, Jason and Reynolds, Laria and Schoelkopf, Hailey and Skowron, Aviya and Sutawika, Lintang and Tang, Eric and Thite, Anish and Wang, Ben and Wang, Kevin and Zou, Andy},
  title        = {The Language Model Evaluation Harness},
  month        = 07,
  year         = 2024,
  publisher    = {Zenodo},
  version      = {v0.4.3},
  doi          = {10.5281/zenodo.12608602},
  url          = {https://zenodo.org/records/12608602}
}

@article{dubois2024length,
  title={Length-Controlled AlpacaEval: A Simple Way to Debias Automatic Evaluators},
  author={Dubois, Yann and Galambosi, Bal{\'a}zs and Liang, Percy and Hashimoto, Tatsunori B},
  journal={arXiv preprint arXiv:2404.04475},
  year={2024}
}

@article{kulesza2012determinantal,
  title={Determinantal point processes for machine learning},
  author={Kulesza, Alex and Taskar, Ben and others},
  journal={Foundations and Trends{\textregistered} in Machine Learning},
  volume={5},
  number={2--3},
  pages={123--286},
  year={2012},
  publisher={Now Publishers, Inc.}
}

@article{deb2025fishersft,
  title={FisherSFT: Data-Efficient Supervised Fine-Tuning of Language Models Using Information Gain},
  author={Deb, Rohan and Thekumparampil, Kiran and Kalantari, Kousha and Hiranandani, Gaurush and Sabach, Shoham and Kveton, Branislav},
  journal={arXiv preprint arXiv:2505.14826},
  year={2025}
}
